\newtheorem{proof}{Proof}[section]
\newtheorem{theorem}{Theorem}[section]
\newtheorem{myDef}{Definition}
\def\method{Aurora}
\newcommand{\calW}{\boldsymbol{\mathcal{W}}}
\newcommand{\R}{\mathbb{R}}
\newcommand{\calL}{\mathcal{L}}
\newcommand{\calV}{\boldsymbol{\mathcal{V}}}
\newcommand{\calX}{\mathcal{X}}
\newcommand{\calY}{\mathcal{Y}}
\newcommand{\X}{\mathcal{X}}
\newcommand{\Y}{\mathcal{Y}}
\newcommand{\calZ}{\mathcal{Z}}
\newcommand{\Sum}[2]{\sum_{#1}^{#2}}
\newcommand{\lam}{\lambda}
\newcommand{\br}[1]{\left(#1\right)}
\newtheorem{assumption}{Assumption}[section]
\title{Parameter-efficient Tuning of Large-scale \\ Multimodal Foundation Model}
\author{
\textbf{Haixin Wang}$^{1,}$\thanks{Equal contribution. $^\dagger$Corresponding author.} , \textbf{Xinlong Yang}$^{1,*}$, \textbf{Jianlong Chang}$^{2,\dagger}$, \textbf{Dian Jin}$^3$, \textbf{Jinan Sun}$^{1,\dagger}$, \\ \textbf{Shikun Zhang}$^1$, \textbf{Xiao Luo}$^1$, \textbf{Qi Tian}$^2$\\
$^1$Peking University, $^2$Huawei Cloud \& AI, $^3$University of Wisconsin-Madison \\
\{wang.hx, xinlong.yang\}@stu.pku.edu.cn, \{jianlong.chang, tian.qi1\}@huawei.com,\\ 
djin38@wisc.edu, \{sjn, zhangsk, xiaoluo\}@pku.edu.cn
}
\begin{document}

\maketitle
\begin{abstract}

Driven by the progress of large-scale pre-training, parameter-efficient transfer learning has gained immense popularity across different subfields of Artificial Intelligence. The core is to adapt the model to downstream tasks with only a small set of parameters. Recently, researchers have leveraged such proven techniques in multimodal tasks and achieve promising results. However, two critical issues remain unresolved: \textit{how to further reduce the complexity with lightweight design} and \textit{how to boost alignment between modalities under extremely low parameters.} In this paper, we propose \underline{\textbf{A}} gracef\underline{\textbf{u}}l p\underline{\textbf{r}}ompt framew\underline{\textbf{o}}rk for c\underline{\textbf{r}}oss-modal tr\underline{\textbf{a}}nsfer (\textbf{\method{}}) to overcome these challenges. Considering the redundancy in existing architectures, we first utilize the mode approximation to generate 0.1M trainable parameters to implement the multimodal parameter-efficient tuning, which explores the low intrinsic dimension with only 0.04\% parameters of the pre-trained model. Then, for better modality alignment, we propose the Informative Context Enhancement and Gated Query Transformation module under extremely few parameters scenes. A thorough evaluation on six cross-modal benchmarks shows that it not only outperforms the state-of-the-art but even outperforms the full fine-tuning approach. Our code is available at: \url{https://github.com/WillDreamer/Aurora}.

\end{abstract}

\section{Introduction}

The era of large models in deep learning has arrived, with increasingly more large-scale pre-trained models exhibiting remarkable generation and inference capabilities in the fields of text \cite{lewis2019bart,brown2020language,touvron2023llama}, vision \cite{rombach2022high, oquab2023dinov2}, and multi-modality \cite{radford2021learning,li2022blip,alayrac2022flamingo,zhang2022glipv2}. 
Nowadays, the mainstream strategy in the community involves pre-training models on large-scale data, and then fine-tuning them for each downstream task. This approach has been demonstrated to be effective in various cross-modal tasks, as evidenced by \cite{li2022align,luo2022clip4clip,wang2022image}.
% Its popularity has only increased with the continuous improvement of pre-trained model capabilities.

However, this high-performing strategy is not always advantageous, as several practical factors limit its further development. The primary concern is excessive parameter dependency. For example, GPT-3 \cite{brown2020language} (175B params) obtains striking performances while coming at the cost of a massive parameter count.
The enormous parameter size has two obvious drawbacks. Firstly, it incurs significant computations and physical storage costs, making pre-training and transfer very expensive. Besides, fine-tuning limits pre-trained knowledge's effectiveness in small-scale downstream tasks.
Both of them hinder the further extension of large models from specific datasets to more general scenarios.

To relieve the high cost of large pre-trained models, a series of parameter-efficient transfer learning (PETL) methods \cite{ding2023parameter,yu2023visual} have been proposed. The common paradigm is to freeze the backbone network of the large model and introduce a small number of additional parameters. 
Recently, some works have begun to focus on PETL in the multimodal field, such as UniAdapter \cite{lu2023uniadapter}, VL-Adapter \cite{sung2022vl}, and MAPLE \cite{khattak2022maple}. However, their common idea is to combine existing architectures used in NLP for multimodal models, which simply inserts learnable parameters in the backbone networks of unimodal and multimodal branches to achieve good performances. Their simple designs cannot integrate the essence of efficient parameter transfer into multimodal models. There are still two main challenges that need to face: one is \textit{how to transfer knowledge in an extremely lightweight manner}, and the other is \textit{how to boost alignment between modalities under extremely low parameters.}

%% Our motivation & method
% Intrinsic Decomposition with Context-aware Learning
% 1. Encoder-only architecture has redundant features in the attention layer, which can be cut by inner intrinsic dimensions implemented by low rank approximation.
% 2. There is a lack of contextual information in PETL, which it's important for multimodal learning.
% 3. Existing work rarely explores how to tune multiple modalities efficiently. How to implement modality alignment while keeping the lightweight design.

To overcome the challenges mentioned above, we propose \underline{\textbf{A}} gracef\underline{\textbf{u}}l p\underline{\textbf{r}}ompt framew\underline{\textbf{o}}rk for c\underline{\textbf{r}}oss-modal tr\underline{\textbf{a}}nsfer, named \textbf{Aurora}, which surpasses the state of the art with a remarkably small parameter budget of merely $\sim$100K, thereby aptly living up to its name due to lightness. \textbf{First}, inspired by the idea of CANDECOMP/PARAFAC (CP) decomposition \cite{tucker1966some}, we utilize mode approximation to generate lightweight learnable parameters for the attention-based architectures. As attention is proven to inevitably fit the background noise \cite{li2023clip}, most other features are redundant. For example, the categories in CLIP \cite{radford2021learning} are essentially infinite, which results in only a few features responding to a single class. This can be easily proven by the fact that the logits output by CLIP are mostly between 0.2 to 0.3 with very little fluctuation, indicating that most features are consistent. In other words, although pre-trained models have large-scale high-dimensional parameters, the intrinsic dimension corresponding to each downstream task is not large \cite{qin2021exploring}. Thus, our mode approximation can fine-tune a very small number of parameters to achieve good performance on downstream tasks theoretically. 

\textbf{Second}, in contrast to existing methods that directly insert a small network for cross-modal fusion, we mine deeper into the unimodal information and fuse them in a controllable manner with multimodal representations, thereby achieving desirable results with lower parameter dependence. In our proposed Informative Context Enhancement module, each feature is enriched by contextual information to boost the modality fusion. Furthermore, we adaptively control the ratio of textual information supplied for modality fusion in our proposed Gated Query Transformation module.

Finally, we evaluate \method{} on six cross-modal tasks and two zero-shot tasks, which obtains state-of-the-art performance compared to other PETL methods.
Compared to the full fine-tuning, \method{} even obtains 1.8\% and 0.5\% performance improvement on MSRVTT and VQAv2 benchmarks averagely but only with about 0.05\% trainable parameters of the pre-trained model. 

\section{Related Work}

\subsection{Vision-Language Models}
In vision-language pre-training, large-scale image-text pairs are used to pre-train the Vision-Language Models (VLMs) for downstream tasks \cite{chen2020uniter, radford2021learning,li2022blip,yang2022diffusion,alayrac2022flamingo,zhang2022glipv2}. The architecture of VLMs typically consists of a visual encoder, a textual encoder, and a cross-modal fusion module. VLMs have been applied to various tasks, including image generation \cite{yang2022sgdiff}, image captioning \cite{li2022mplug,hu2022expansionnet}, visual question answering \cite{chen2022pali,chen2023valor,driess2023palm}, and cross-modal retrieval \cite{shu2022masked,huo2021wenlan}. 
% There are three main types of VLMs: single-stream models, two-stream models, and cross-modal retrieval models. Pre-training and fine-tuning techniques, such as self-supervised learning, contrastive learning, and multimodal fusion, have been used to improve VLMs' performance. 
This article focuses on how to efficiently fine-tune VLMs on downstream tasks and utilizes BLIP-base \cite{li2022blip} for vision-language tasks.

\subsection{Parameter-efficient Transfer Learning}
As the size of recent models increases rapidly, updating the models in parameter-efficient ways becomes crucial. The huge computational burden brought by these pre-trained models will undoubtedly impose unnecessary burdens on transfer learning. 
PETL \cite{ding2023parameter,yu2023visual} methods diverge from the conventional approach of fine-tuning the entire pre-trained model, instead only learning a few additional parameters for knowledge transfer.
There are three common categories, prompt tuning \cite{zhou2022learning,jia2022visual,wang2023lion,lin2023comes}, adapter tuning \cite{karimi2021compacter,chen2022adaptformer,sung2022vl} and parameter tuning \cite{ hu2021lora,jie2022fact,sung2022lst}.
We aim to relieve the computational burden by developing a graceful prompt framework specifically for cross-modal transfer, adapting frozen pre-trained multimodal models to downstream tasks across a broad distribution.

\subsection{Tensor Decomposition}
Tensor decomposition \cite{kolda2009tensor,anandkumar2014tensor,sidiropoulos2017tensor,ma2019tensorized} is an important research area studied for decades, which aims to approximate a tensor through a set of low-rank factors with diverse applications. In general, the most widely used decompositions are Tucker decomposition \cite{de2000multilinear} and CANDECOMP/PARAFAC (CP) decomposition \cite{tucker1966some}, both of which can be seen as generalizations of the matrix singular value decomposition (SVD) \cite{stewart1993early}. CP decomposition can also be seen as a special Tucker decomposition whose core tensor is diagonal, but is more \textit{lightweight} and \textit{explainable}, meaning that a high-order tensor can be uniquely represented as the sum of a set of rank-one tensors theoretically. Here, we leverage the idea of CP decomposition to implement mode approximation for lightweight prompts.

\section{Methodology}

\begin{figure*}[t]
    \centering
    \includegraphics[width=0.95\textwidth]{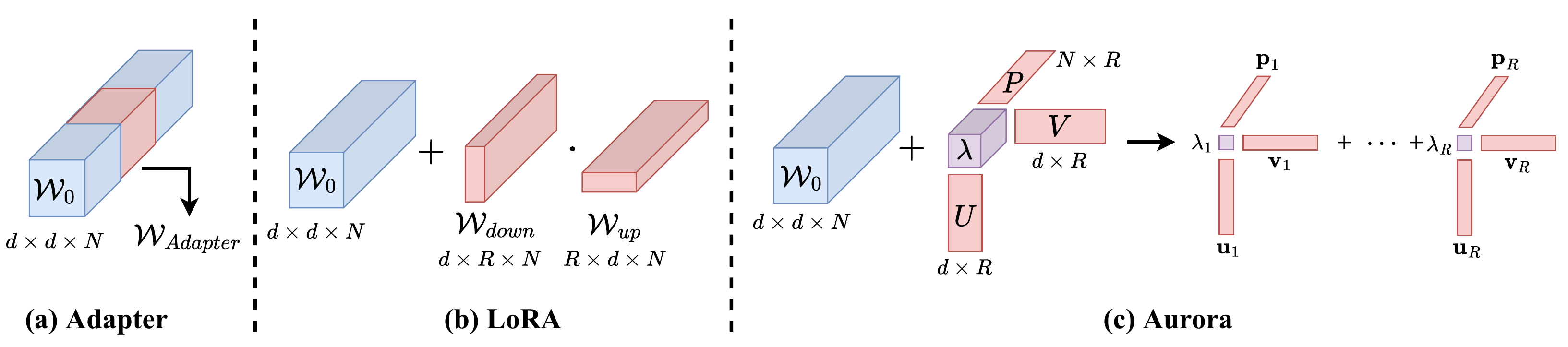}
    \caption{Comparison of existing PETL methods for downstream cross-modal tasks. (a) \textbf{Adapter}, which involves inserting a learnable small network into a pre-trained model; (b) \textbf{LoRA}, which employs a down and up tensor as updated parameters for low-rank approximation (R $\ll$ d), added to the pre-trained model; and (c) our proposed \textbf{\method{}}, which utilizes mode approximation to further reduce the number of trainable parameters added to the pre-trained model. Notably, the red blocks represent trainable parameters, while the blue ones indicate the frozen backbone.}
    \label{fig:intro} 
    \vspace{-0.5cm}
\end{figure*}

\subsection{Background}

\noindent\textbf{Frozen Backbone.} BLIP~\cite{li2022blip} is a unified VLP framework which has multimodal mixture
of encoder-decoder(MED) architecture with both understanding and generation capabilities. We utilize BLIP-base as the frozen backbone and the pre-trained weights can be downloaded from Salesforce. Its visual encoder is ViT-B~\cite{dosovitskiy2020image} and the text encoder is the same as BERT~\cite{devlin2018bert} while the text decoder replaces the self-attention layers with causal self-attention layers. It uses cross-attention layers to gather information from encoded visual representations using the textual representations as query. It's flexible to choose different components in the BLIP architecture to perform different multimodality downstream tasks.

We start with the core attention-based Transformer architecture mostly utilized in existing large-scale multimodal models \cite{radford2021learning,li2021supervision,yao2021filip,li2022blip} for representation. 
The input image/text is divided into several non-overlapping patches, and then these patches appended a [CLS] token are fed into an embedding layer followed by the Transformer blocks with multi-head attention as the core operation.
It copies the input embedding into query, key, and value by three projection matrices $W_q^l$, $W_k^l$, and $W_v^l \in \mathbb{R}^{d \times d}$ respectively, where $l$ represents the $l$-th layer.  The pre-trained model contains many dense layers which perform matrix multiplication as follows:
\begin{equation*}
    \mathrm{Attention}(X_{q,m}^l, X_{k,m}^l ,X_{v,m}^l) = \mathrm{Softmax}\left( \frac{X_{q,m}^l W_{q,m}^l \left( {X_{k,m}^l W_{k,m}^l} \right)^T}{\sqrt{d}} \right) X_{v,m}^l {W_{v,m}^l}
\end{equation*}
where $m \in \{ I, T, C\}$ denotes the vision, text, and cross-modal modality. For the unimodal vision/text modality branch, $m$ are all from the vision/text modality. For the cross-modal modality branch, $X_{q,m}^l$ is from the text modality, while the other two are from the visual modality. Assume there are $L$ layers in total, we can stack all of the attention-based weight matrices in the multimodal pre-trained model, and derive the tensor $\boldsymbol{\mathcal{W}}_0 = {\{ W_q^l, W_k^l, W_v^l \} }_{l=1}^L \in \mathbb{R}^{d \times d \times N}$, where $d$ is the dimension of the embedding token, and $N$ denotes the total number of the weight matrices. 

\noindent\textbf{Parameter Update.}
For downstream tasks, directly updating the weight tensor with full fine-tuning will consume a huge amount of computation and brings about a heavy storage burden. We aim to update the knowledge with a few additional trainable parameters following the idea of PETL methods. Due to the redundancies of $\boldsymbol{\mathcal{W}}_0$, we hope to implement the mode approximation of the tensor $\boldsymbol{\mathcal{W}}_0$ to get the new learnable weight tensor $\boldsymbol{\Delta{\mathcal{W}}}$ for downstream knowledge transfer. The differences between our \method{} and existing PETL methods are demonstrated in Figure \ref{fig:intro}. Therefore, the backward propagation on the downstream training data $\mathcal{D}$ can be expressed as:
\begin{equation}
    \nabla_{\boldsymbol{\mathcal{W}}} = \nabla_{\boldsymbol{\Delta{\mathcal{W}}}} = \frac{\partial \mathcal{L}\left( \mathcal{D};\boldsymbol{\mathcal{W}}_0 + \boldsymbol{\Delta{\mathcal{W}}} \right)}{\partial \boldsymbol{\Delta{\mathcal{W}}}}
\end{equation}

\begin{figure*}[t]
    \centering
    \includegraphics[width=\textwidth]{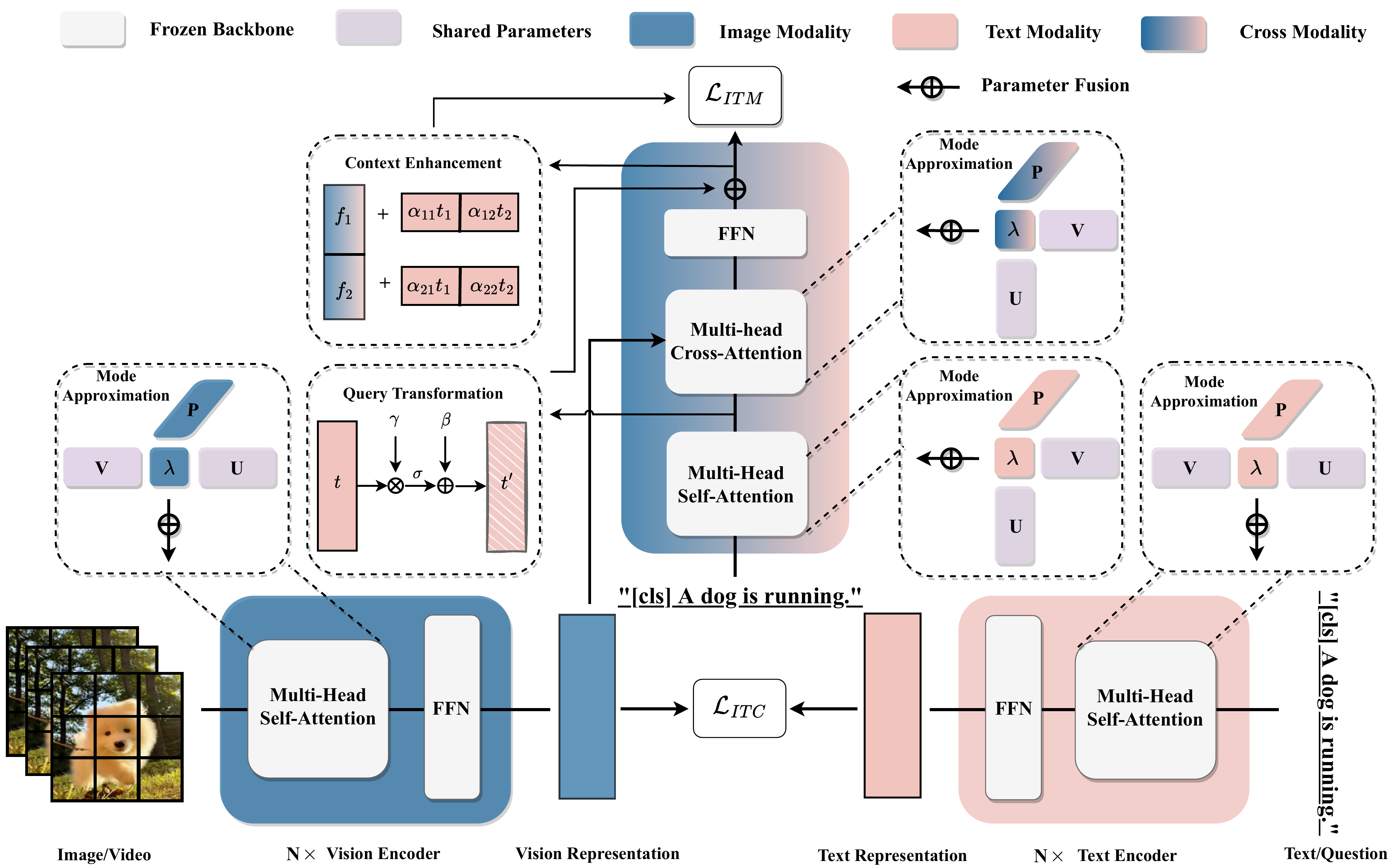}
     \caption{Demonstration of the overall framework. The frozen backbone network is shown in grey. The trainable parameters in color represent: blue for vision tasks, pink for text tasks, and the gradient color for fused modalities. Notably, globally shared parameters are represented in purple. }
    \label{fig:framework} 
    \vspace{-0.5cm}
\end{figure*}

\subsection{Lightweight Design for PETL }
To ensure a successful parameter-efficient tuning framework, it is important to prioritize a lightweight design that enables easy deployment and scalability.
Recent research \cite{li2023clip} has shown that when self-attention-based architecture is pre-trained on large-scale datasets, there is a significant amount of feature redundancy due to the limited responses of individual classes in a nearly infinite feature space. In fact, there exists a low-dimensional reparameterization that is as effective for fine-tuning as the full parameter space \cite{aghajanyan2020intrinsic}. This inherent dimensionality describes the minimum dimension required to solve the optimization problem it defines to a certain level of precision. 
% Inspired by LoRA~\cite{hu2021lora}, we recognize that low-rank decomposition can scale down the weight parameters of pre-trained large-scale models along certain directions, thus making them more suitable for downstream tasks.
Considering the pre-trained parameters as a tensor, we recognize that approximation can effectively preserve low-rank yet discriminative non-redundant features, and scale down the weight tensor of pre-trained large-scale models along certain directions, thus making them more suitable for downstream tasks.
Therefore, we propose a novel mode approximation method named Aurora, which utilizes mode approximation to update the frozen $\boldsymbol{\mathcal{W}}_0$. Specifically, we borrow the idea of CANDECOMP/PARAFAC (CP) decomposition to decompose the learnable parameters $\boldsymbol{\Delta{\mathcal{W}}}$ into a series rank-one tensors to explore the inherent dimensionality embedded in the features. The detailed architecture is shown in Figure \ref{fig:framework}.

%CP分解
\noindent\textbf{CP Decomposition.} In the typical tensor decomposition method, a 3D tensor has three modes (further introduced in Appendix), each of which can be viewed as the reduced projection of the tensor in a specific dimension. Given the 3D updated weight tensor $\boldsymbol{\Delta{\mathcal{W}}} \in \mathbb{R}^{d \times d \times N} $, CP decomposition factorizes this tensor into a sum of $R$ rank-one components in total, and each component can be formalized as the outer product of three decomposed vectors in the formulation of:
\begin{equation}
    \boldsymbol{\Delta{\mathcal{W}}} = \sum_{r=1}^R \lambda_r  (\boldsymbol{u_r} \circ \boldsymbol{v_r} \circ \boldsymbol{p_r})
\end{equation}
where $\boldsymbol{u_r} \in \mathbb{R}^{d}$, $\boldsymbol{v_r} \in \mathbb{R}^{d}$, and $\boldsymbol{p_r} \in \mathbb{R}^{N}$ are decomposed vectors for the $r$-th component, while each vector belongs to the corresponding mode matrix, i.e., $\boldsymbol{u_r}$ is the column vector in $ U = [u_1, \cdots, u_r, \cdots, u_R]$. In addition, $\circ$ represents the outer product, $\lambda_r$ is the coefficient scalar of each component, and $R$ is the rank of CP decomposition. For better understanding, each component contributes to the value of the tensor in the sum of scalar products:
\begin{equation}
    \boldsymbol{\Delta{\mathcal{W}}}_{ijk} = \sum_{r=1}^R \lambda_{r} ( u_{r,i} v_{r,j} p_{r,k}), \quad i,j \in \{1, 2, \cdots, d \}, k \in \{1, 2, \cdots, N \}
\end{equation}
where $i, j, k$ denote the indices of the three modes.

\noindent\textbf{Mode Approximation.}
In multimodal tasks, learning modality-specific representations and modality fusion representations are both important. Therefore, we aim to implement mode approximation to update the frozen attention-based weight tensor $\boldsymbol{\mathcal{W}}_0$, including the self-attention module in the vision/text encoders, and the cross-attention module in the multimodal encoders, which are based on the pre-trained multimodal foundation model like BLIP \cite{li2022blip}. We first approximate the attention-based weight in these modules by initializing three mode factors, i.e., $U = \left[ \boldsymbol{u_1}, \cdots, \boldsymbol{u_R} \right] $, $V = [\boldsymbol{v_1}, \cdots, \boldsymbol{v_R} ]$, and $P = [\boldsymbol{p_1}, \cdots, \boldsymbol{p_R} ]$. $U, P$ are randomly initialized with Gaussian distribution, and $V$ is with zeros, so that $\boldsymbol{\Delta{\mathcal{W}}} = 0$ before training.
It should be noted that $U, V$ are shared as global factors utilized for mode approximation, which means that our \method{} considers the cross-modal interaction and share the knowledge between these weight matrices in each modality. Besides, to further capture the discriminative features of each modality, we randomly initialize the learnable coefficient vector $\boldsymbol{\lambda}^{m} \in \mathbb{R}^{R}$ for each weight matrix on the modality $m$ respectively. With these three mode factors, we can implement the mode approximation in the forward propagation by the inverse progress of CP decomposition with input tensor $\boldsymbol{\mathcal{X}}^{m}$ as follows:
\begin{equation}
    \boldsymbol{\mathcal{H}}^{m} = \boldsymbol{\mathcal{W}}_0 \boldsymbol{\mathcal{X}}^{m} + \left( \sum_{r=1}^R \lambda_r^{m} (\boldsymbol{u_r} \circ \boldsymbol{v_r} \circ \boldsymbol{p_r})\right) \boldsymbol{\mathcal{X}}^{m}
\end{equation}

Analyzed from the perspective of prompt learning, our idea of approximating the pre-trained weight parameters $\boldsymbol{\mathcal{W}}_0$ with additional trainable parameters $\boldsymbol{\Delta{\mathcal{W}}}$ can be essentially understood as the soft prompts, which learns on downstream data based on CP decomposition. 
Such prompts not only provide better guidance for downstream tasks, but also are very lightweight in design, which greatly facilitates the application of pre-trained models to many cross-modal tasks in the unified mechanism.

\subsection{Modality Alignment Design}
Unlike existing methods that directly insert learnable networks to explicitly achieve cross-modal alignment, we further propose two effective modules to align different modalities with few trainable parameters. Thus, with the addition of mode approximation above, we can achieve a graceful prompt framework for cross-modal transfer, which is both lightweight and high-performance.

\noindent\textbf{Informative Context Enhancement.} For better modality alignment, we aim to provide prompts that can activate the fusion features after the cross-attention module. Inspired by the development in In-Context Learning \cite{min2022rethinking,dong2022survey}, we realize that the demonstration template is important for the prompts. The most intuitive approach is to align image-text pairs to obtain more cross-modal contextual information. However, even with relevant image regions, there may still be more than one way to describe these regions. Some texts may accurately summarize the content of an image, whereas others may not. Without \textit{a priori} for matched textual information, we determine to introduce the context enhancement module to provide coverage of possible textual information.

We adopt the image-grounded text branch from BLIP and design a specific demonstration template for cross-modal prompt tuning. Given the fusion features of the image-grounded text branch $F = \{\boldsymbol{f_1}, \cdots, \boldsymbol{f_{|\mathcal{B}|}}\} \in \mathbb{R}^{|\mathcal{B}| \times E}$ and the textual query features of the self-attention module $T = \{\boldsymbol{t_1}, \cdots, \boldsymbol{t_{|\mathcal{B}|}}\} \in \mathbb{R}^{|\mathcal{B}| \times E}$, we utilize all of the query features with dimension $E$ in a batch $\mathcal{B}$ as the context for enhancement. Specifically, we calculate the attention score $\alpha \in \mathbb{R}^{|\mathcal{B}| \times |\mathcal{B}|}$ between feature $\boldsymbol{f_i}$ and each textual query feature from $\boldsymbol{t_1}$ to $\boldsymbol{t_{|\mathcal{B}|}}$ as:
\begin{equation}
    \alpha_{ij} = \frac{\exp(\boldsymbol{f}_i \cdot \boldsymbol{t}_j)}{\sum_{b=1}^{|\mathcal{B}|}\exp(\textbf{f}_i \cdot \boldsymbol{t}_b)}
\end{equation}
In order to generate a more effective advanced fusion feature $F'$, all the adaptively weighted query features $T$ within a batch are collected together with a specific fusion feature $\boldsymbol{f_i}$ to form the demonstration template. This form can adaptively absorb context query information to derive a better enhanced fusion feature for the image-text matching loss $\mathcal{L}_{\mathrm{ITM}}$ as $\boldsymbol{f'_i} = \boldsymbol{f_i} + \sum_{j=1}^{|\mathcal{B}|} \alpha_{ij} \boldsymbol{t_j}
$.

\noindent\textbf{Gated Query Transformation.} Another reason why modality alignment is difficult is that the multi-modality fusion branch network is deep, which can cause textual information to be lost during training. To address this issue, we propose a novel approach inspired by gating mechanisms to explicitly model the relative contributions of textual information during modality alignment. Specifically, instead of directly concatenating the fusion representation $\boldsymbol{f}$ (output of the Cross-Attention block) with the query representation $\boldsymbol{t}$ (output of the Self-Attention block) as residuals in existing methods \cite{lu2023uniadapter}, we learn a gated query function to balance the contributions of both modalities.
Our gated query transformation involves two steps. First, we implement the transformation as follows: $\boldsymbol{t'} = \sigma( \gamma \odot \boldsymbol{t} ) + \beta$, where $\gamma$ and $\beta$ are zero-initialized learnable transformation matrix and bias with an activation function $\sigma$. Second, we calculate the query gate $\mathbf{g}$ by computing the product of $\boldsymbol{t'}$ and $\boldsymbol{f}$ with Softmax. It should be noted that $\gamma$ and $\beta$ are zero initialized, so that $\boldsymbol{t'}$ is zero at the beginning of training. Hence, the query gate explicitly measures the contribution of the query representation in the formulation of $\mathbf{g} \odot \boldsymbol{f} + (\boldsymbol{1}- \mathbf{g}) \odot \boldsymbol{t}'$ to update the fusion representation $\boldsymbol{f}$.

% \newpage
\section{Experiments}

\subsection{Experimental Settings}
\noindent\textbf{Datasets \& Baselines.}
We evaluate \method{} on six benchmarks spanning four cross-modal tasks: image-text retrieval, question answering (QA), and video-text retrieval and QA. We compare \method{} with two types of approaches: full fine-tuning methods including SOTA for each task, and frozen backbone methods including LoRA~\cite{hu2021lora} and UniAdapter \cite{lu2023uniadapter}. See more details in Appendix.

\noindent\textbf{Implementation Details.} Our implementation is based on Salesforce's open-source codebase~\cite{li2022blip}. Following \cite{lu2023uniadapter}, we also apply BLIP-base \cite{li2022blip} as our vision-language backbone for all the multimodal downstream tasks. We use PyTorch to implement all experiments on 8$\times$ NVIDIA V100 GPU (32G). We use AdamW~\cite{loshchilov2017decoupled} optimizer with a weight decay of 0.05 and a learning rate of 1e-4 obtained from grid search for all experiments. Note that during the fine-tuning process, the parameters of the backbone model are kept frozen. More training details can be seen in Appendix.

\subsection{Performance Comparisons on Cross-modal Tasks}

\begin{table*}[t]
\caption{Results on image-text retrieval datasets MSCOCO and FLICKR30K. Using the text query, we simplify retrieving images as T$\rightarrow$I and vice versa. Recall@$K$ represents the recall of top-$K$ returned samples. \# Tunable is the size of the learnable parameters in the backbone network. $\Delta_{PETL}$ represents the performance gap between our \method{} and the best PETL method.}
\centering
\scriptsize
\vspace{-0.2cm}
\setlength{\tabcolsep}{1.41mm}{
\begin{tabular}{lc|cccccccccccc}
\toprule
\multicolumn{1}{l}{\multirow{2}{*}{Method}} & \multicolumn{1}{l|}{\multirow{2}{*}{\# Tunable}} & \multicolumn{3}{c}{MSCOCO (I$\rightarrow$T)} & \multicolumn{3}{c|}{MSCOCO (T$\rightarrow$I)} & \multicolumn{3}{c}{FLICKR30K (I$\rightarrow$T)} & \multicolumn{3}{c}{FLICKR30K (T$\rightarrow$I)} \\
\multicolumn{1}{c}{} & \multicolumn{1}{l|}{} & R@1 & R@5 & R@ 10 & R@1 & R@5 & \multicolumn{1}{c|}{R@10} & R@1 & R@5 & R@10 & R@1 & R@5 & R@10 \\ \midrule
\multicolumn{14}{l}{\textbf{Methods with full fine-tuning}} \\ \midrule
UNITER & 330M & 65.7 & 88.6 & 93.8 & 52.9 & 79.9 & \multicolumn{1}{c|}{88.0} & 87.3 & 98.0 & 99.2 & 75.6 & 94.1 & 96.8 \\
VILLA & 330M & - & - & - & - & - & \multicolumn{1}{c|}{-} & 87.9 & 97.5 & 98.8 & 76.3 & 94.2 & 96.8 \\
OSCAR & 330M & 73.5 & 92.2 & 96.0 & 57.5 & 82.8 & \multicolumn{1}{c|}{89.8} & - & - & - & - & - & - \\
ALIGN & 820M & 77.0 & 93.5 & 96.9 & 59.9 & 83.3 & \multicolumn{1}{c|}{89.8} & 95.3 & 99.8 & \textbf{100.0} & 84.9 & 97.4 & 98.6 \\
ALBEF & 210M & 77.6 & 94.3 & 97.2 & 60.7 & 84.3 & \multicolumn{1}{c|}{90.5} & 95.9 & 99.8 & \textbf{100.0} & 85.6 & 97.5 & \textbf{98.9} \\
BLIP & 223M & \textbf{81.9} & \textbf{95.4} & \textbf{97.8} & \textbf{64.3} & \textbf{85.7} & \multicolumn{1}{c|}{\textbf{91.5}} & \textbf{97.3} & 99.9 &\textbf{100.0} & \textbf{87.3 }& 97.6 & \textbf{98.9} \\ \midrule
\multicolumn{14}{l}{\textbf{Methods with frozen backbone}} \\ \midrule
LoRA (r = 32) & 10.6M & 80.0 & 94.1 & 97.2 & 62.1 & 84.4 & \multicolumn{1}{c|}{90.6} & 96.2 & 99.7 & 99.8 & 85.8 & 97.1 & 98.4 \\
UniAdapter (r=128) & 4.6M & 79.8 & 94.2 & 97.5 & 62.3 & 84.5 & \multicolumn{1}{c|}{90.8} & 97.1 & \textbf{100.0} & \textbf{100.0} & 86.5 & 97.4 & 98.8 \\
UniAdapter (r=512) & 18.8M & 80.1 & 94.6 & 97.4 & 62.6 & 84.6 & \multicolumn{1}{c|}{90.9} & 97.1 & 99.9 &\textbf{100.0} & 86.4 & 97.4 & \textbf{98.9} \\ \midrule
Aurora (ours, r=64) & \textbf{0.1M} & 80.2 & 95.1 & 97.7  &62.4  & 84.5 & \multicolumn{1}{c|}{91.0} & 96.8 &\textbf{ 100.0} & \textbf{100.0} & 86.7 & \textbf{97.8} & 98.7 \\
$\Delta_{PETL}$ & \textbf{\textcolor[HTML]{b63020}{0.5$\%$}}
&   \textbf{\textcolor[HTML]{b63020}{$+$0.1}}
&  \textbf{\textcolor[HTML]{b63020}{$+$0.5}}
&  \textbf{\textcolor[HTML]{b63020}{$+$0.2}}
&  \textbf{\textcolor[HTML]{6ac3e6}{$-$0.2}}
&  \textbf{\textcolor[HTML]{6ac3e6}{$-$0.1}}
& \multicolumn{1}{c|}{\textbf{\textcolor[HTML]{b63020}{$+$0.1}}} 
& \textbf{\textcolor[HTML]{6ac3e6}{$-$0.3}}
& \textbf{\textcolor[HTML]{b63020}{$+$0.0}}
& \textbf{\textcolor[HTML]{b63020}{$+$0.0}}
& \textbf{\textcolor[HTML]{b63020}{$+$0.2}}
& \textbf{\textcolor[HTML]{b63020}{$+$0.4}}
& \textbf{\textcolor[HTML]{6ac3e6}{$-$0.2}}  \\ 
Aurora (ours, r=128) & \textbf{0.2M} & 80.7  & 95.3 & \textbf{97.8} & 62.8 & 84.8 & \multicolumn{1}{c|}{91.0} & 97.2 &\textbf{ 100.0} & \textbf{100.0} & 86.8 & 97.6 & \textbf{98.9} \\
$\Delta_{PETL}$ & \textbf{\textcolor[HTML]{b63020}{1$\%$}}
&  \textbf{\textcolor[HTML]{b63020}{$+$0.6}}
&   \textbf{\textcolor[HTML]{b63020}{$+$0.7}}
&   \textbf{\textcolor[HTML]{b63020}{$+$0.3}}
&   \textbf{\textcolor[HTML]{b63020}{$+$0.2}}
&   \textbf{\textcolor[HTML]{b63020}{$+$0.2}}
& \multicolumn{1}{c|}{ \textbf{\textcolor[HTML]{b63020}{$+$0.1}}} 
&  \textbf{\textcolor[HTML]{b63020}{$+$0.1}}
&  \textbf{\textcolor[HTML]{b63020}{$+$0.0}}
&  \textbf{\textcolor[HTML]{b63020}{$+$0.0}}
&  \textbf{\textcolor[HTML]{b63020}{$+$0.3}}
&  \textbf{\textcolor[HTML]{b63020}{$+$0.2}}
& \textbf{\textcolor[HTML]{b63020}{$+$0.0}} \\

\bottomrule
\end{tabular}}\label{tab:maintab1}
\vspace{-0.3cm}
\end{table*}

% text-image的检索结果分析
\noindent\textbf{Image-Text Retrieval. } Table \ref{tab:maintab1} shows performances for image-text retrieval tasks on MSCOCO~\cite{lin2014microsoft} and FLICKR30K~\cite{plummer2015flickr30k}. It can be observed that our \method{} (R=64) achieves comparable results with the state-of-the-art frozen backbone method while using only 0.5\% of its parameters. When we increase the rank to 128, \method{} can further boost the performance, surpassing all the frozen backbone methods, and even outperforming some full fine-tuning methods with fewer trainable parameters. 

\begin{table*}[t]
\caption{Results on two benchmark video-text retrieval datasets, MSR-VTT and DiDemo. Recall@$K$ and MdR are utilized as the evaluation metric, where MdR measures the median rank of target items in the retrieved ranking list. Input means the sampling number and frame shape of each video. }
\centering
\scriptsize
\vspace{-0.2cm}
\setlength{\tabcolsep}{1.85mm}{
\begin{tabular}{lccccccccccc}
\toprule
\multirow{2}{*}{Method} & \multirow{2}{*}{Input} & \multirow{2}{*}{\# Pretrain} & \multicolumn{1}{c|}{\multirow{2}{*}{\# Tunable}} & \multicolumn{4}{c|}{MSR-VTT} & \multicolumn{4}{c}{DiDemo} \\
 &  &  & \multicolumn{1}{c|}{} & R@1 & R@5 & R@10 & \multicolumn{1}{c|}{MdR} & R@1 & R@5 & R@10 & MdR \\ \midrule
\multicolumn{12}{l}{\textbf{Methods with full fine-tuning}} \\ \midrule
ClipBERT & 16$\times$448 & 5M & \multicolumn{1}{c|}{135M} & 22.0 & 46.8 & 59.9 & \multicolumn{1}{c|}{6.0} & 20.4 & 48.0 & 60.8 & 6.0 \\
Frozen in Time & 32$\times$224 & 5M & \multicolumn{1}{c|}{180M} & 31.0 & 59.5 & 70.5 & \multicolumn{1}{c|}{3.0} & 34.6 & 65.0 & 74.7 & 3.0 \\
ALPRO & 8$\times$224 & 5M & \multicolumn{1}{c|}{245M} & 33.9 & 60.7 & 73.2 & \multicolumn{1}{c|}{3.0} & 35.9 & 67.5 & 78.8 & 3.0 \\
VIOLET & 5$\times$224 & 138M & \multicolumn{1}{c|}{306M} & 34.5 & 63.0 & 73.4 & \multicolumn{1}{c|}{-} & 32.6 & 62.8 & 74.7 & - \\
All-in-one & 9$\times$224 & 138M & \multicolumn{1}{c|}{110M} & 37.9 & 68.1 & 77.1 & \multicolumn{1}{c|}{-} & 32.7 & 61.4 & 73.5 & 3.0 \\
CLIP4Clip & 12$\times$224 & 400M & \multicolumn{1}{c|}{124M} & 43.1 & 70.4 & 80.8 & \multicolumn{1}{c|}{2.0} & 42.8 & 68.5 & 79.2 & 2.0 \\
CLIP-Hhiker & 120$\times$224 & 400M & \multicolumn{1}{c|}{124M} & 47.7 & \textbf{74.1} & \textbf{82.9} & \multicolumn{1}{c|}{-} & - & - & - & - \\ \midrule
\multicolumn{12}{l}{\textbf{Methods with frozen backbone}} \\ \midrule
CLIP-Prompt & 16$\times$224 & 400M & \multicolumn{1}{c|}{64M} & 36.7 & 64.6 & - & \multicolumn{1}{c|}{-} & - & - & - & - \\
LoRA (r=32) & 8$\times$224 & 129M & \multicolumn{1}{c|}{10.6M} 
&  49.9
&  72.0
&  81.3
& \multicolumn{1}{c|}{2.0} 
&  50.9
&  75.3
&  82.4
& 2.0 \\
% UniAdapter (r=512) & 16\times224 & 129M & \multicolumn{1}{c|}{18.8M} & 50.5 & 73.9 & 81.7 & \multicolumn{1}{c|}{\textbf{1.0}} & \textbf{53.7} & \textbf{78.3} & \textbf{87.2} & \textbf{1.0} \\
UniAdapter (r=128) & 8$\times$224 & 129M & \multicolumn{1}{c|}{4.6M} & 49.7 & 71.9 & 81.5 & \multicolumn{1}{c|}{2.0} & 49.0 & 75.5 & 83.3 & 2.0 \\
UniAdapter (r=512) & 8$\times$224 & 129M & \multicolumn{1}{c|}{18.8M} & 50.6 & 73.4 & 81.6 & \multicolumn{1}{c|}{\textbf{1.0}} & 52.1 & 77.3 & 85.2 & \textbf{1.0 }\\
Aurora (ours, r=64) & 8$\times$224 & 129M & \multicolumn{1}{c|}{\textbf{0.1M}} & \textbf{52.4} & 73.9 & 82.0 & \multicolumn{1}{c|}{\textbf{1.0}} & \textbf{53.1} & \textbf{77.4} & \textbf{85.3} & \textbf{1.0} \\
$\Delta_{PETL}$ & - & - & \multicolumn{1}{c|}{\textbf{\textcolor[HTML]{b63020}{0.5$\%$}}} &  \textbf{\textcolor[HTML]{b63020}{$+$1.8}}
&  \textbf{\textcolor[HTML]{b63020}{$+$0.5}}
&  
\textbf{\textcolor[HTML]{b63020}{$+$0.4}}
& \multicolumn{1}{c|}{\textbf{\textcolor[HTML]{b63020}{$+$0.0}}} 
&  \textbf{\textcolor[HTML]{b63020}{$+$1.0}}
&  \textbf{\textcolor[HTML]{b63020}{$+$0.1}}
&  \textbf{\textcolor[HTML]{b63020}{$+$0.1}}
& \textbf{\textcolor[HTML]{b63020}{$+$0.0}}\\ \bottomrule
\end{tabular}}\label{tab:maintab2}
% \vspace{0.1cm}
\end{table*}

% text-video的结果分析
\noindent\textbf{Video-Text Retrieval.} To further verify the performance of our \method{} in the field of video-text retrieval, we conduct experiments on two video datasets, MSRVTT~\cite{xu2016msr} and DiDemo~\cite{anne2017localizing}, and the results are presented in Table \ref{tab:maintab2}. With only about 0.1M trainable parameters, our \method{} directly achieves better performances than all the frozen backbone methods, and we outperform most full fine-tuning methods. This indicates that our \method{} has an excellent understanding ability under video-text scenes, even with relatively few trainable parameters.
%

% \begin{table*}[t!]
% \centering
% \scriptsize
% \caption{Results on two visual question answering datasets, VQAv2 and MSRVTT-QA. $#$Tunable represents the number of learnable parameters. For VQAv2, we report the test-dev and test-std results, for MSRVTT-QA, accuracy is used as the evaluation metric.}
% \setlength{\tabcolsep}{2.5pt}{
% \begin{tabular}{lcc|cc|cc}
% \toprule
% \multicolumn{3}{c|}{\textbf{VQAv2}} & \multicolumn{3}{c}{\textbf{MSRVTT-QA}} \
% \cmidrule{1-3} \cmidrule{4-6}
% Method &$#$Tunable &test-dev test-std & Method &$#$Tunable & test acc\
% & (M) & & & (M) & \
% \midrule
% \multicolumn{6}{l}{\textbf{Methods with full fine-tuning}} \
% VL-T5/BART & 165 & - &71.30 &CLIPBERT& 135 & 37.4\
% SOHO & 155 & 73.25 &73.47 &ALPRO & 245& 42.1\
% OSCAR & 330 & 73.61 &73.82 &Just-Ask &200 & 41.5\
% UNITER & 330 & 73.82 &74.03 &VIOLET &306 & 43.9\
% $\ddots$ & $\ddots$ & $\ddots$ & $\ddots$ & $\ddots$ & $\ddots$ & $\ddots$\
% BLIP & 337 & 77.44& 77.48&All-in-one&110 & 44.3\
% \midrule
% \multicolumn{6}{l}{\textbf{Methods with frozen backbone}} \
% UniAdapter (r=128)& 4.6 & 73.72& 73.71&UniAdapter (r=128) &4.6 & 44.2\
% UniAdapter (r=512)& 18.8 & 75.44& 75.56&UniAdapter (r=512)& 18.8& 44.7\
% Aurora (ours,r=64)& \textbf{0.1} & XX & XX &Aurora (ours,r=64)& \textbf{0.1}& XX \
% $\Delta_{PETL}$ & & & & & &\
% \bottomrule
% % \end{tabular}}
% \label{tab:maintab3}
% \end{table*}
\begin{table*}[t]
\caption{Results on two visual question answering datasets, VQAv2 and MSRVTT-QA. \# Tunable represents the number of learnable parameters. For VQAv2, we report the test-dev and test-std results, for MSRVTT-QA, accuracy is used as the evaluation metric.}
\centering
\scriptsize
\vspace{-0.2cm}
\setlength{\tabcolsep}{3.8mm}{
\begin{tabular}{lccc|lcc}
\toprule
Method & \# Tunable & \multicolumn{2}{c|}{\begin{tabular}[c]{@{}c@{}}VQAv2 \\test-dev \qquad 
  test-std\end{tabular}} & Method & \# Tunable & \begin{tabular}[c]{@{}c@{}}MSRVTT-QA\\ test acc\end{tabular} \\ \midrule
\multicolumn{7}{l}{\textbf{Methods with full fine-tuning}} \\ \midrule
VL-T5/BART & 165M & -  & \multicolumn{1}{c|}{71.30} & CLIPBERT & 135M & 37.4 \\
SOHO & 155M & 73.25 & \multicolumn{1}{c|}{73.47} & ALPRO & 245M & 42.1 \\
OSCAR & 330M & 73.61 & \multicolumn{1}{c|}{73.82} & Just-Ask & 200M & 41.5 \\
UNITER & 330M & 73.82 & \multicolumn{1}{c|}{74.03} & VIOLET & 306M & 43.9 \\
ALBEF & 266M & 75.84 & \multicolumn{1}{c|}{76.04} & MERLOT & 233M & 43.1 \\
BLIP & 337M & 77.44 & \multicolumn{1}{c|}{77.48} & All-in-one & 110M & 44.3 \\ \midrule
\multicolumn{7}{l}{\textbf{Methods with frozen backbone}} \\ \midrule
UniAdapter (r=128) & 4.6M & 73.72 & 73.71 & UniAdapter (r=128)  & 4.6M & 44.2 \\
UniAdapter (r=512) & 18.8M & 75.44 & 75.56 & UniAdapter (r=512) & 18.8M & 44.7 \\
Aurora (ours, r=64) & \textbf{0.1M} & \textbf{77.69} & \textbf{77.87} & Aurora (ours, r=64) & \textbf{0.1M} & \textbf{44.8} \\
$\Delta_{PETL}$ & - & \textbf{\textcolor[HTML]{b63020}{$+$2.25}} & \textbf{\textcolor[HTML]{b63020}{$+$2.31}} & $\Delta_{PETL}$ & - & \textbf{\textcolor[HTML]{b63020}{$+$0.1}}  \\ \bottomrule
\end{tabular}}\label{tab:maintab3}
\vspace{-0.5cm}
\end{table*}

\noindent\textbf{Visual Question Answering.} In an effort to further explore the potential of our \method{}, we evaluate it on VQA/VideoQA
tasks with VQAv2~\cite{goyal2017making}/MSRVTT-QA~\cite{xu2017video} datasets and demonstrate the evaluation results in Table \ref{tab:maintab3}. Unlike retrieval tasks, VQA task needs to verify the model’s multimodal generative ability. We share the trainable parameters of the multimodal encoder and multimodal decoder to further reduce the parameter amount. From the results, we find that our \method{} outperforms UniAdapter and all the full fine-tuning methods, indicating that \method{} can have powerful transfer ability for downstream generative tasks.

\subsection{Performance Comparisons on Zero-shot Setting}

To evaluate the generalization ability of \method{}, we conduct experiments on cross-modal tasks with zero-shot setting, and make comparisons with the pretrained version, full fine-tuning method, LoRA, and UniAdapter. 

% \begin{wraptable}{r}{0.58\textwidth}
\begin{table}[]
\caption{Zero-shot performance analysis.}
% \vspace{-0.3cm}
\centering
\label{tab:zs}
\scriptsize
\setlength{\tabcolsep}{4.5mm}{
\begin{tabular}{lc|ccc|ccc}
\toprule
\multirow{2}{*}{Method} & \multirow{2}{*}{\# Parameter} & \multicolumn{3}{c|}{MSRVTT (T$\rightarrow$V)} & \multicolumn{3}{c}{DiDemo (T$\rightarrow$V)} \\
 &  & R@1 & R@5 & R@10 & R@1 & R@5 & R@10 \\ \midrule
BLIP  & 223M & 41.5 & 62.0 & 70.7 & 42.1 & 59.6 & 67.3  \\
BLIP + FFT & 223M & 42.5 & 62.8 & 71.6 & 43.0 & 60.5 & 68.3  \\
BLIP + LoRA & 10.6M & 42.7 & 62.8 & 71.4 & 43.4 & 60.3 & 68.2\\
BLIP + UniAdapter & 18.8M & 42.2 & 62.6 & 71.1 & 43.1 & 60.2 & 67.9\\
BLIP + \method{} & \textbf{0.1M} & \textbf{43.1} & \textbf{63.5} & \textbf{72.0} & \textbf{44.6} & \textbf{61.4} & \textbf{68.6} \\ \bottomrule
\end{tabular}}
% \vspace{-0.3cm}
\end{table}
% \end{wraptable}
Specifically, experiments are based on the pre-trained large-scale multimodal foundation models, i.e., BLIP, which is further tuned in each corresponding method on MSCOCO. Then, we utilize two video datasets as the zero-shot data for validation.
Table \ref{tab:zs} provides an overview of the performance of \method{} on various zero-shot multimodal tasks. It is obvious that \method{} achieves the highest zero-shot performance while requiring the least number of trainable parameters during vision-language pre-training, which represents more powerful general-purpose understanding ability.

\begin{figure*}[t]
    \centering
    \includegraphics[width=\textwidth]{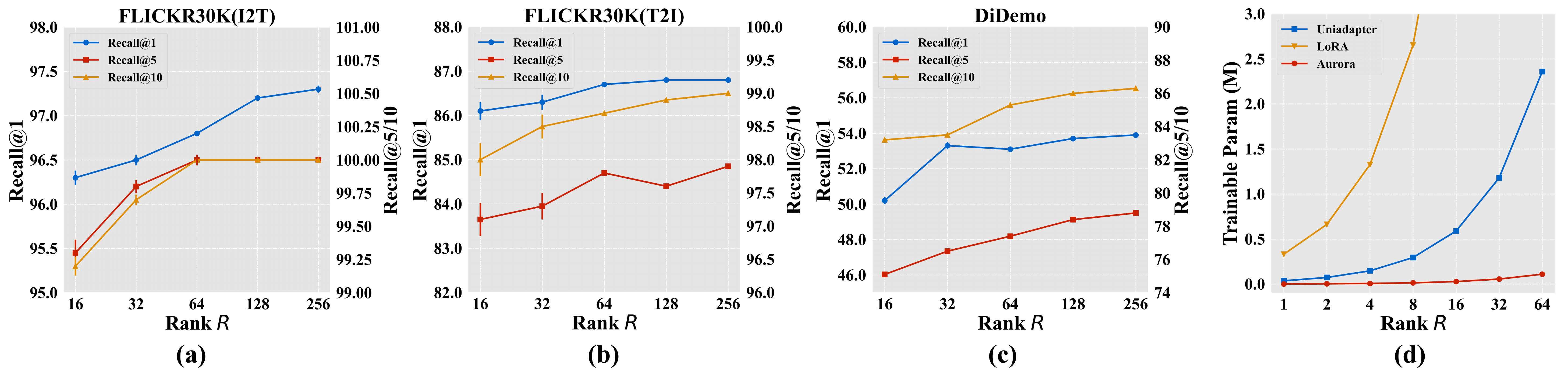}
    \caption{The answer to how rank $R$ affects \method{}. \textbf{(a)}, \textbf{(b)}, and \textbf{(c)} show the performance increase accompanied with larger $R$ on three different cross-modal tasks. Notably, our results are divided on two $y$-axes for clear demonstration, where Recall@1 is shown on the left axis and Recall@5/10 are on the right one. \textbf{(d)} compares the parameter scalability with other PETL methods.}
    \label{fig:rank} 
    \vspace{-0.5cm}
\end{figure*}

\subsection{Analysis of Different Designs}
We thoroughly evaluate how the core operations in \method{} affect results, i.e., the rank $R$, parameter sharing, context enhancement, and the gated query transformation. We conduct experiments to analyze the impacts of different designs, which are implemented on several cross-modal tasks.

\noindent\textbf{How Rank of CP Decomposition Affects \method{}?}
% R = 16 32 64 128 

Results on Flickr30K and DiDemo are shown in the left three columns in Figure \ref{fig:rank}. A fundamental conclusion is that as $R$ increases, the dimensionality of the model representation also increases, leading to better performance on downstream tasks. However, when $R$ reaches a certain range, the rate of increase slows down, which may be related to the redundancy of high-dimensional features. 

In addition, we show in (d) of Figure \ref{fig:rank} that as $R$ increases, the growth rate of parameter size in \method{} is much slower than that of LoRA and UniAdapter. As another tensor decomposition method, our scalability and ease of deployment are much stronger than the baselines.

\noindent\textbf{How Does \method{} Benefit from Informative Context Enhancement?}

\begin{wrapfigure}{r}{0.5\textwidth}
    \centering
    \vspace{-0.3cm}
    \includegraphics[width=0.5\textwidth]{ 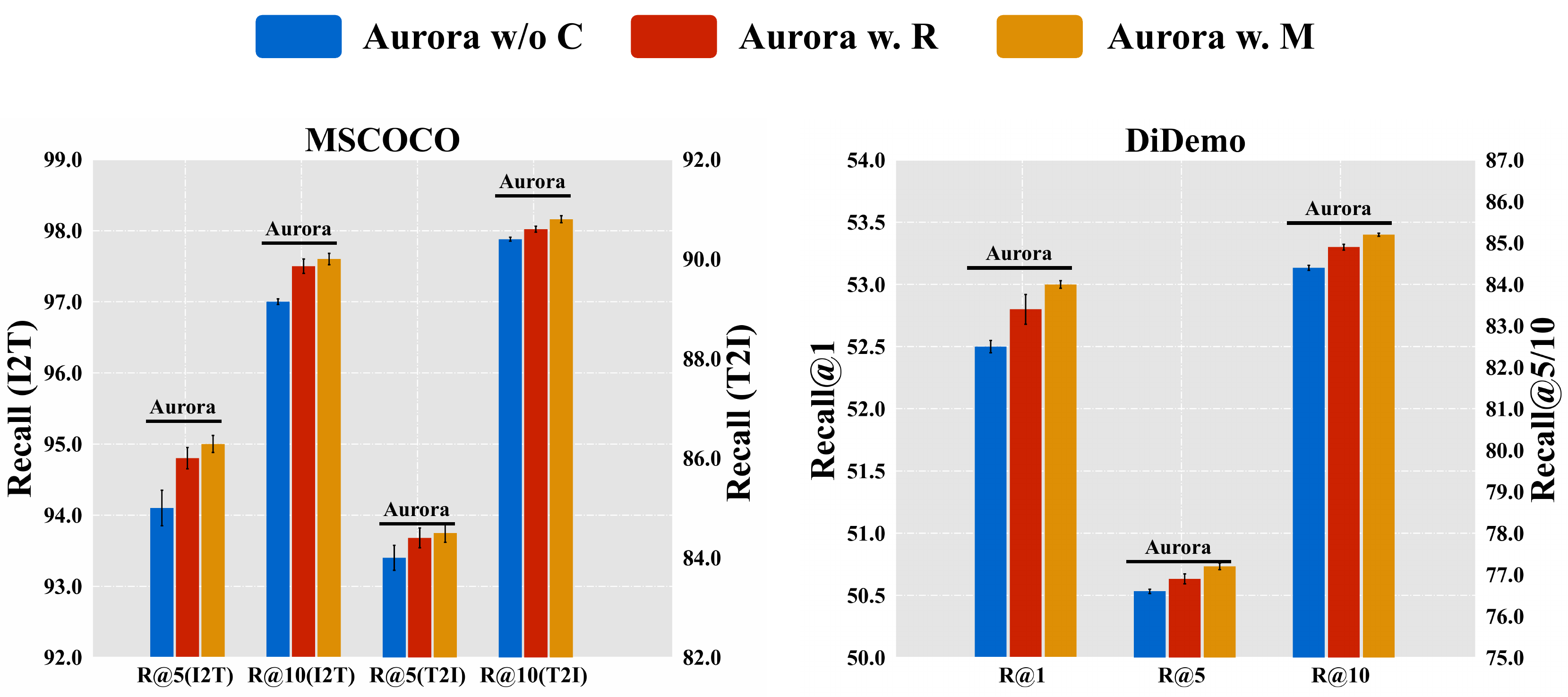}
    \caption{Analysis of the impact of the informative context enhancement module.} 
    % We separate the results on two $y$-axes respectively, which can better demonstrate the differences across different indicators more clearly. And the results of our \method{} are marked by the black upper lines. }
    \label{fig:context_ablation}
    \vspace{-0.3cm}
\end{wrapfigure}
For an in-depth investigation, we introduce three ablated variants, i.e., \textbf{\method{} w/o C} removes the context enhancement, \textbf{\method{} w. R} replaces the context with random vectors, and \textbf{\method{} w. M} simply takes the average features of all queries. Ablation results are shown in Figure \ref{fig:context_ablation}, where we separate the results on two $y$-axes respectively, which can better demonstrate the differences across different indicators more clearly. And the results of our \method{} are marked by the black upper lines. Surprisingly, using random vectors to supplement fused features still achieves better results than removing this module. Moreover, we observe that adaptive weight $\alpha$ causes better performance than the uniform weight. These phenomena fully demonstrate that with few parameters, supplementing more context information to fused features is crucial to promote modality alignment for downstream tasks, and our context enhancement is the optimal choice.

% \begin{figure}[t]
%     \begin{minipage}{0.49\linewidth}
%     \setlength{\abovecaptionskip}{0.1cm}
%     \centering
%     \includegraphics[width=\textwidth]{ }
%     \caption{Analysis of the impact of the informative context enhancement module. We separate the results on two $y$-axes respectively, which can better demonstrate the differences across different indicators more clearly. And the results of our \method{} are marked by the black upper lines. }
%     \label{fig:context}
%     \vspace{-0.3cm}
%     \end{minipage} 
%     \hfill
%     \begin{minipage}{0.49\linewidth}
%     \setlength{\abovecaptionskip}{0.1cm}
%     \centering
%     \includegraphics[width=\textwidth]{ }
%     \caption{
%     Analysis of the gated query transformation module. Based on two benchmark datasets, we find that removing it decreases performance in the blue bars, while our \method{}, shown in yellow bars, outperforms initializing the parameters with all ones in the red bars. }
%     \label{fig:scale}
%     \vspace{-0.3cm}
%     \end{minipage}
% \end{figure}

% 消融结果分析

\noindent\textbf{How Does \method{} Benefit from Parameter Sharing? }

\begin{figure}[t]
\centering
\resizebox{\textwidth}{!}{
\includegraphics{ 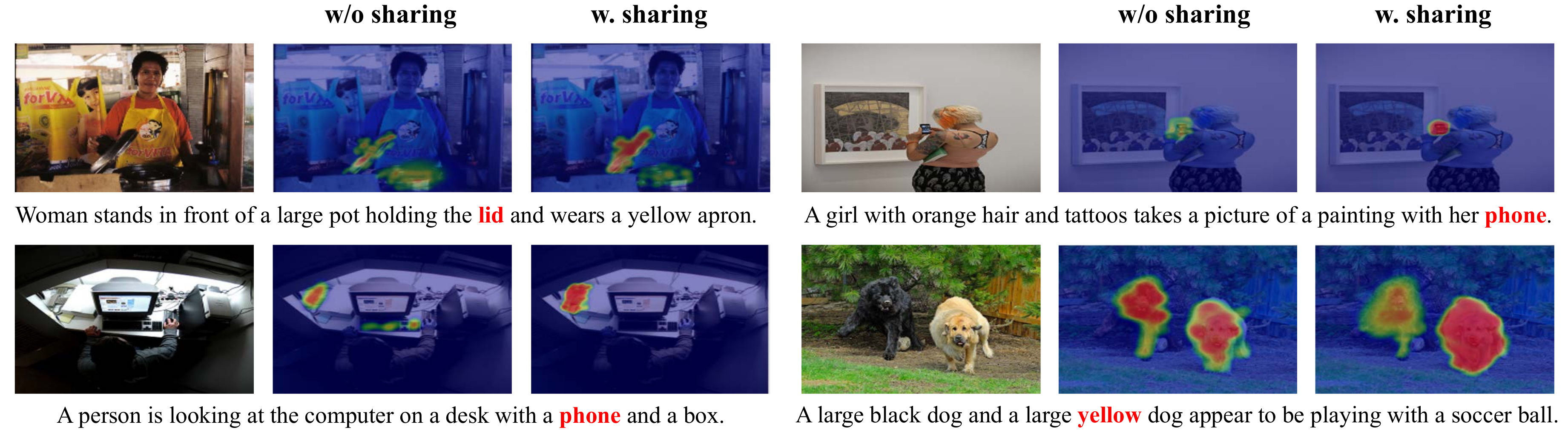}
}
\caption{Visualization of cross-attention map comparisons on Flickr30K, which shows the capability to locate the most semantic-related visual parts for specific words in the text. }
\label{fig:heatmap}
\vspace{-0.5cm}
\end{figure}
See Figure \ref{fig:heatmap} for the effectiveness of parameter sharing. We observe that the parameter sharing strategy can achieve a better capability to locate the most semantic-related visual parts for specific words in the text. It should be noted that we are able to reduce 0.4M parameters through parameter sharing. Moreover, we also conduct quantitative experiments on FLICKR30K, in which parameter sharing results in better performance with an average improvement of 0.1. Therefore, we can draw the conclusion that parameter sharing during training not only decreases the parameter dependence but also boost the cross-modal interaction for better performances.

\noindent\textbf{How Does \method{} Benefit from Gated Query Transformation?}
% scale and shifts. w/o ; 0/1

\begin{wrapfigure}{r}{0.5\textwidth}
    \centering
    \vspace{-0.3cm}
    \includegraphics[width=0.5\textwidth]{ 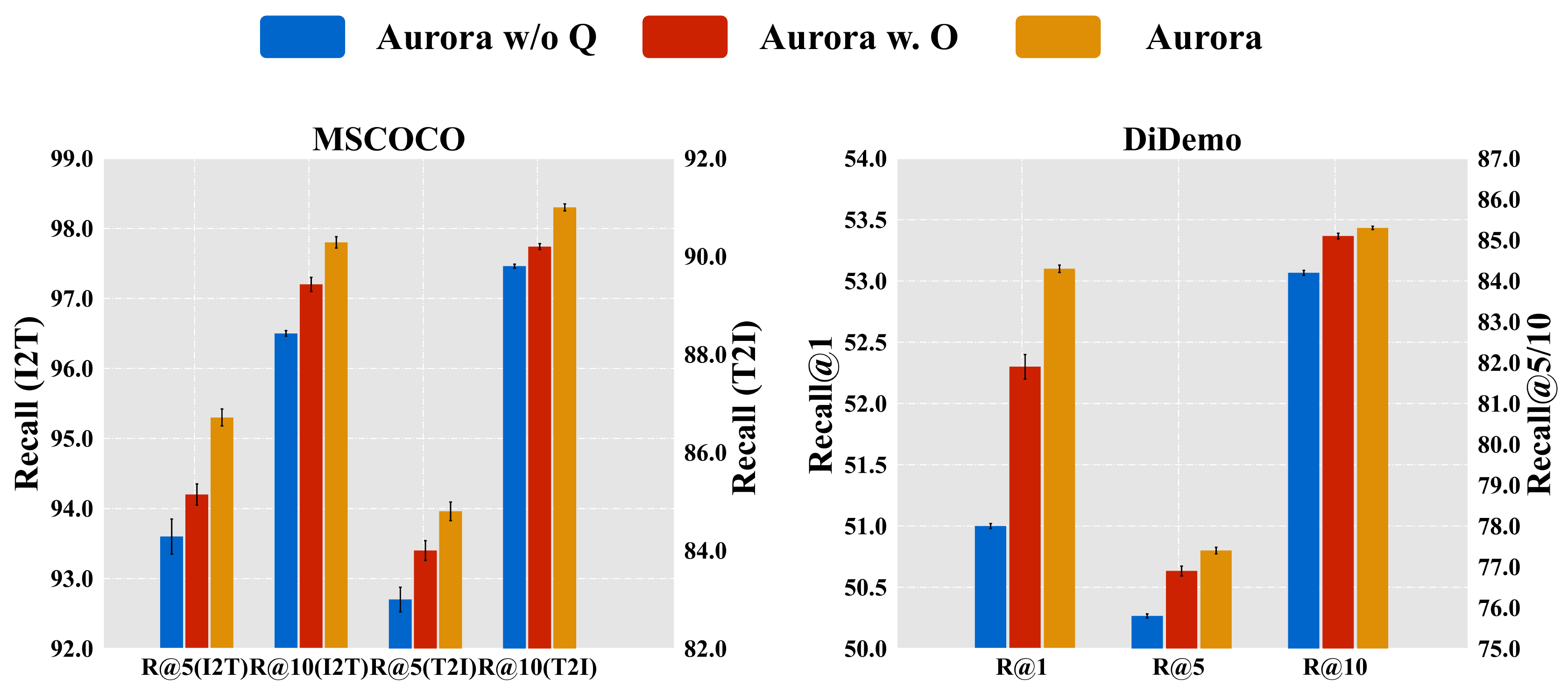}
    \caption{Analysis of the impact of the gated query transformation module.} 
    % We separate the results on two $y$-axes respectively, which can better demonstrate the differences across different indicators more clearly. And the results of our \method{} are marked by the black upper lines. }
    \label{fig:scale_ablation}
    \vspace{-0.3cm}
\end{wrapfigure}
To further validate the effectiveness of the Gated Query Transformation module, we conduct a comprehensive ablation study on MSCOCO and DiDemo datasets with two variants. The experimental results are shown in Figure \ref{fig:scale_ablation}, where \textbf{\method{} w/o Q} (blue bars) represents the removal of the Gated Query Transformation and \textbf{\method{} w. O} (red bars) initializes the transformation matrix and bias with ones. The gap between the blue bars and ours shows that this module helps promote the final performance to some extent, which also indicates that adaptively incorporating text information into the modality fusion branch has a positive effect on modality alignment under the deep network architecture. In addition, the difference between the red bars and ours shows that initializing from zeros is more conducive to adaptively controlling the proportion of text feature fusion, and can gradually achieve better modality alignment for better results.

\subsection{Visualization Analysis}

\begin{figure}[t]
    \begin{minipage}{0.49\linewidth}
    \setlength{\abovecaptionskip}{0.1cm}
    \centering
    \includegraphics[width=\textwidth]{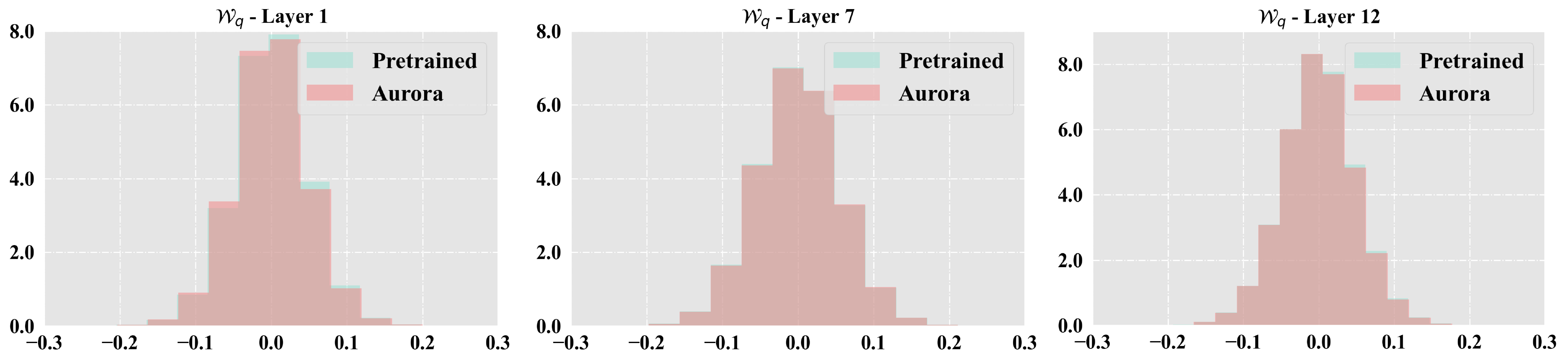}
    \vspace{-0.3cm}
    \end{minipage} %
    \hfill
    \begin{minipage}{0.49\linewidth}
    \setlength{\abovecaptionskip}{0.1cm}
    \centering
    \includegraphics[width=\textwidth]{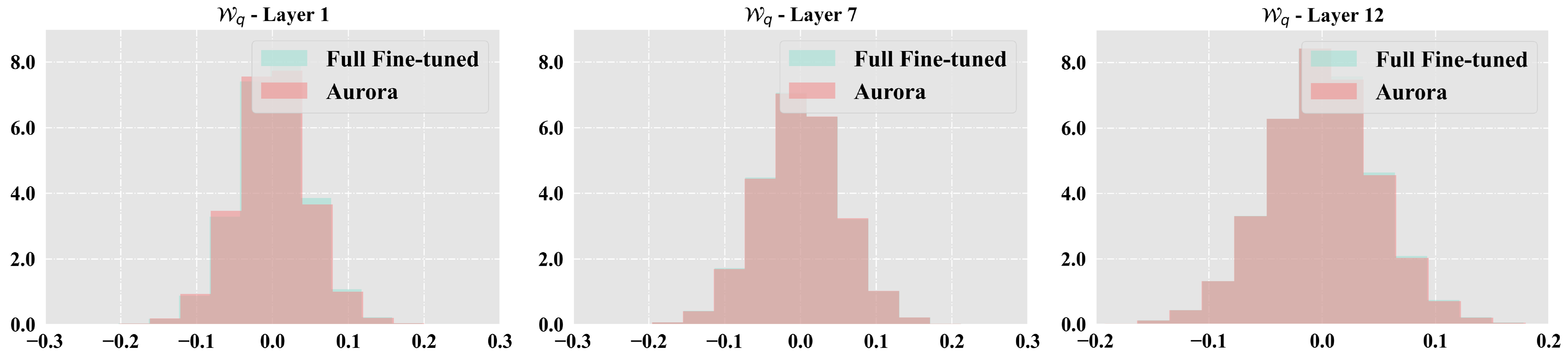}
    \end{minipage} %
    \caption{The left three columns show the parameter distribution of the pre-trained large-scale multimodal foundation model (BLIP) \textit{vs.} our Aurora, which is tuned on MSCOCO. And the right part is full fine-tuned model \textit{vs.} Aurora. Notably, $\boldsymbol{\mathcal{W}}_{q}$ is the stack of the query projection matrices in different modality branches.   }
    \label{fig:pd} 
    \vspace{-0.3cm}
\end{figure}

\noindent\textbf{Parameter Distribution.} 
 We visualize the parameter distributions of the pre-trained model, fully fine-tuned model, and our \method{} in Figure \ref{fig:pd}. \method{} involves gradually embedding the knowledge learned from downstream tasks into the parameters of a pre-trained model, while amplifying the original parameters in a certain direction. From the left part, we can see that our \method{} only adjusts the pre-trained model parameters in a small local range, but it can have a better effect on downstream tasks. Meanwhile, we can see from the right part that \method{}'s parameter distribution is very close to that of the fully fine-tuned model, with only small changes in a small range. These small changes enable our method to achieve superior performance on many downstream tasks.

\section{Conclusion}\label{sec:conclusion}
This paper proposes \method{}, a graceful prompt framework for cross-modal transfer. We first leverage mode approximation to implement multimodal prompt tuning, which explores the low intrinsic dimension with only 0.04\% parameters of the pretrained model. Then, to better reduce the modality gap, we propose Informative Context Enhancement and Gated Query Transformation modules under extremely low parameter scenarios. Extensive evaluation of \method{} on six cross-modal benchmarks shows that it not only outperforms the state-of-the-art but even surpasses full fine-tuning approach.

\noindent\textbf{Limitations.}
The representation ability of our \method{} to some extent depends on the setting of the rank $R$. However, it also unavoidably increases the parameter size and thus the computational cost. Therefore, it is crucial for us to select a proper rank in the trade-off between parameter size and performance. Unfortunately, it is challenging for us to determine a suitable rank beforehand with the variations of downstream tasks and data sizes, which poses a challenge to the application of our model. We will further optimize this issue in future work.

\noindent\textbf{Broader Impact.} Multimodal pre-trained large models have wide applications in the real world, including image-text understanding, retrieval, question answering, and generation. As models scale up, they will achieve more amazing performances. Our \method{} enables lightweight transfer of large-scale models, which allows us to better utilize cloud-based large models and guide them to produce high-performance edge applications under constrained computational resources.

\clearpage
\section*{APPENDIX}

\appendix

To provide a comprehensive demonstration of our approach, we will supplement additional details. The arrangement of these sections is as follows:
First, we demonstrate the core concepts of our \method{} for clarity.
Second, we comprehensively make overall comparisons with existing methods on various multimodal tasks. Then, we provide details regarding the datasets and baselines in Section \ref{sec:data}, while the concrete training details are outlined in Section \ref{sec:detail}. We then conduct a comprehensive analysis of the computational costs, including time and memory consumption, along with algorithmic complexity in Section \ref{sec:cost}. Furthermore, we provide theoretical support for our approach in Section \ref{sec:analysis}. Finally, we present visualizations of our proposed \method{} for several cross-modal tasks to facilitate qualitative comparisons in Section \ref{sec:vis}. To represent our method clearly and concisely, we use lowercase letters for scalars, bold lowercase letters for \textbf{vectors}, italicized uppercase letters for \textit{MATRICES}, and bold italicized uppercase letters for \textbf{\textit{TENSORS}} in the equations, respectively.

\section{Concept Definition}
According to \cite{acichocki2009nonnegativematrixandtensor,ma2019tensorized}, we will offer a precise definition of the fundamental notions underpinning our key Mode Approximation component.

First, the definition of tensors can be demonstrated as follows:

\begin{myDef}
(Tensor). Let $\mathcal{D}_1, \mathcal{D}_2, \cdots, \mathcal{D}_N \in \mathbb{N}$ denote index upper bounds, a tensor $\boldsymbol{\mathcal{W}} \in \mathbb{R}^{\mathcal{D}_1 \times \cdots \times \mathcal{D}_N}$ of order $N$ is an $N$-way array where elements $\mathcal{W}_{d_1, d_2, \cdots, d_n}$ are indexed by $d_n \in \{1, 2, \cdots, \mathcal{D}_n\}$,  for $ 1 \leq n \leq N.$
\end{myDef}

Then, the concept of the mode is formulated as follows:

\begin{myDef}
(Mode). Let $\boldsymbol{\mathcal{W}} \in \mathbb{R}^{n_1 \times n_2 \times \cdots \times n_d}$ be a $d$-dimensional tensor. The mode-$k$ matricization of $\boldsymbol{\mathcal{W}}$, denoted as $\mathcal{W}^{(k)} \in \mathbb{R}^{n_k \times (n_1 \cdots n_{k-1} n_{k+1} \cdots n_d)}$, is obtained by unfolding the tensor along its $k$-th mode and arranging the entries as rows in a matrix.
\end{myDef}

Given the mathematical definition of the mode, we can implement decomposition in the context of CP decomposition. We stack all the weight matrices in the attention layer (i.e., $W_q, W_k, W_v$) of all the branches into a tensor, which needs to be updated as $\boldsymbol{\Delta \mathcal{W}}$. Since our method assumes that the stack of weight matrices is a three-order tensor, $k$ is three in our settings, and thus the CP decomposition can be illustrated as follows: 

\begin{equation}
    \boldsymbol{\Delta \mathcal{W}} = \sum_{r=1}^{R} \lambda_r \mathbf{u}_r \circ \mathbf{v}_r \circ \mathbf{p}_r,
\end{equation}

where $R$ is the rank, $\lambda_r$ are non-negative scalar weights, and $\mathbf{u}_r \in \mathbb{R}^{n_1}$, $\mathbf{v}_r \in \mathbb{R}^{n_2}$, $\mathbf{p}_r \in \mathbb{R}^{n_3}$ are non-zero factor vectors. And the mode-$k$ unfolding of the tensor $\boldsymbol{\Delta\mathcal{W}}$ is $U, V, P$ respectively.

Our proposed \method{} aims to approximate the latent mode matrix with randomly initialized learnable parameters, which can learn knowledge on downstream tasks in a lightweight manner.

\section{Overall Comparison}
\label{sec:overall}
We compare all the baselines for three downstream tasks and presented a comprehensive illustration in Figure \ref{fig:bubble}. The arrow in the figure points towards better performance on dual metrics, as it moves towards the upper right corner. We rank the size of the model parameters and use it as a basis for determining the size of the bubbles, which are also displayed in Figure \ref{fig:bubble}. It is evident that our method performs remarkably well even with smaller parameter sizes, and in several instances, outperforms the fully fine-tuned approach, demonstrating the strength of our architecture.

\begin{figure*}[t]
    \centering
    \includegraphics[width=\textwidth]{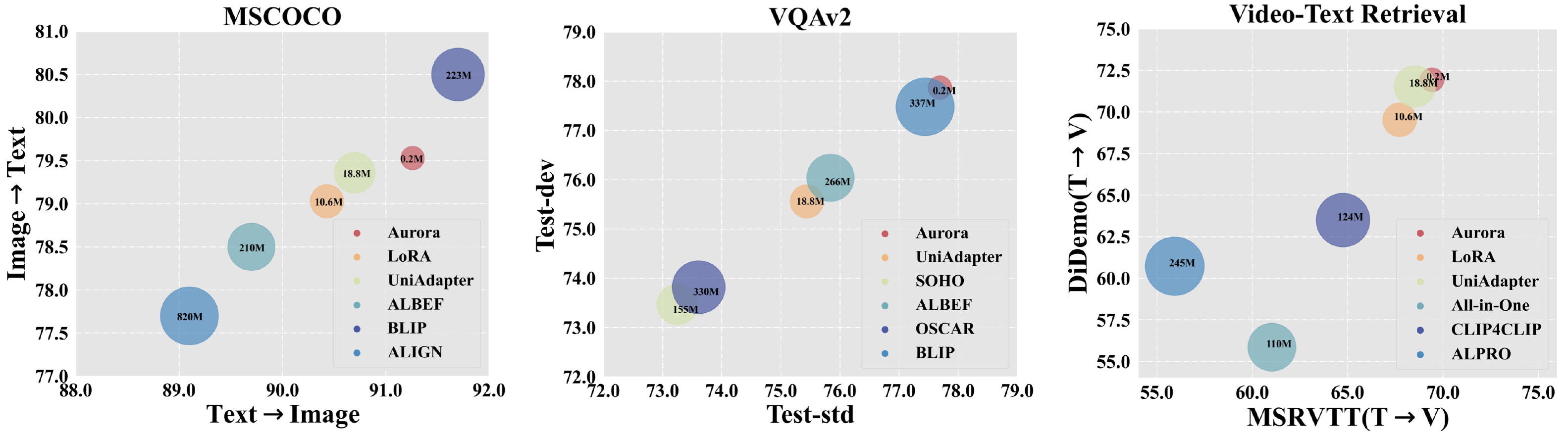}
     \caption{Performance-Parameter comparison of different methods on each multimodal downstream task. Note that the bubble size denotes the size of total tunable parameters.}
    \label{fig:bubble} 
\end{figure*}

\section{More Details for the Baselines \& Datasets.}
\label{sec:data}
% 数据集更细致的描述，可以参考其余论文写的更丰富一点
\noindent\textbf{Baselines.} For Frozen Backbone methods,  UniAdapter~\cite{lu2023uniadapter} is currently the state-of-the-art method for parameter-efficient transfer learning in the field of multimodality and can be considered as a representative of the Adapter class of methods in the multimodal domain. LoRA~\cite{hu2021lora} is another important branch of parameter-efficient transfer learning methods. Its core idea is to use low-rank decomposed matrices to calculate the incremental change of model parameters during adaptation on downstream tasks. To enable comparison with a wider range of baselines, we replicate many of the settings from prior works and reuse their experiment results whenever possible. It should be noted that this means some baselines only appear in specific experiments. 

As for Full Fine Tuning methods, we apply UNITER~\cite{chen2020uniter}, VILLA~\cite{gan2020large}, OSCAR~\cite{li2020oscar}, ALIGN~\cite{jia2021scaling}, ALBEF~\cite{li2021align} and BLIP~\cite{li2022blip} for image-text retrieval tasks, then we use ClipBERT~\cite{lei2021less}, Frozen in Time~\cite{bain2021frozen}, ALPRO~\cite{li2022align}, VIOLET~\cite{fu2021violet}, All-in-one~\cite{wang2022all}, CLIP4Clip~\cite{luo2022clip4clip} and CLIP-Hhiker~\cite{bain2022clip} for text-video retrieval tasks, finally we adopt ClipBERT\cite{lei2021less}, ALPRO~\cite{li2022align}, Just-Ask~\cite{yang2021just}, VIOLET~\cite{fu2021violet}, MERLOT~\cite{zellers2021merlot}, All-in-one~\cite{wang2022all} for VideoQA task while adopt VL-T5/BART~\cite{cho2021unifying}, SOHO~\cite{huang2021seeing}, OSCAR~\cite{li2020oscar}, UNITER~\cite{chen2020uniter}, ALBEF~\cite{li2021align} and BLIP~\cite{li2022blip} for VQA tasks.

\textbf{Datasets.} We provide a comprehensive introduction to the datasets of various downstream tasks in the multimodal scenario, as outlined below: 
\begin{itemize}[leftmargin=*]
    \item \textbf{MSCOCO}~\cite{lin2014microsoft} is a large scale image-text dataset and each image is annotated with five captions. Following \cite{lu2023uniadapter,kim2021vilt}, we use Karpathy split of MSCOCO: 5,000 images for testing, 5,000 images for validation, and the rest for training.

    \item \textbf{Flickr30K}~\cite{plummer2015flickr30k} contains 31,000 images collected from Flickr. Each image is usually referenced with five human annotations. Following previous works \cite{lu2023uniadapter,frome2013devise}, we use 1,000 images for testing, another 1000 for validation, and the rest for training.

    \item \textbf{MSR-VTT}~\cite{xu2016msr} contains 10,000 video clips and each video clip is annotated with 20 English sentences. Following recent works \cite{lu2023uniadapter, luo2022clip4clip}, we adopt 1k-test split for training and testing.

    \item  \textbf{DiDemo}~\cite{anne2017localizing} is one of the most commonly used datasets for the temporal localization of events in videos. It contains about 10,000 videos and 40,000 annotations. we follow \cite{lu2023uniadapter,bain2021frozen} to concatenate all descriptions corresponding to the same video into a single sentence to conduct actually paragraph-to-video retrieval task.

    \item \textbf{VQAv2}~\cite{goyal2017making} is one of the most famous visual question answering datasets which contains 83k/41k/81k images for training/validation/testing. Following \cite{lu2023uniadapter,li2021align, li2022blip}, we use
    both training and validation splits of VQAv2 and additional training samples from Visual Genome \cite{krishna2017visual} for training. The results should be evaluated by the official server and we report the results on the test-dev and test-std splits.

    \item \textbf{MSRVTT-QA}~\cite{xu2017video} is one of the most popular video question answering datasets. It's constructed based on MSRVTT and has 243k open-ended questions associated with 10k videos. We follow \cite{lu2023uniadapter,lei2021less} to employ the standard split for training and testing.
    
\end{itemize}

\section{More Training Details}
\label{sec:detail}
\subsection{Frozen Backbone}

Another important design in BLIP is CapFlit, It contains a Captioner to generate captions given web-searched images and a Filter to remove noisy image-text pairs, both Captioner and Filter are finetuned individually on the COCO dataset while using different objective loss. In addition, BLIP uses momentum technology to further improve the correctness of the image-text matching relationship.

\subsection{Architecture for VQA Tasks}
\begin{figure*}[t]
    \centering
    \includegraphics[width=\textwidth]{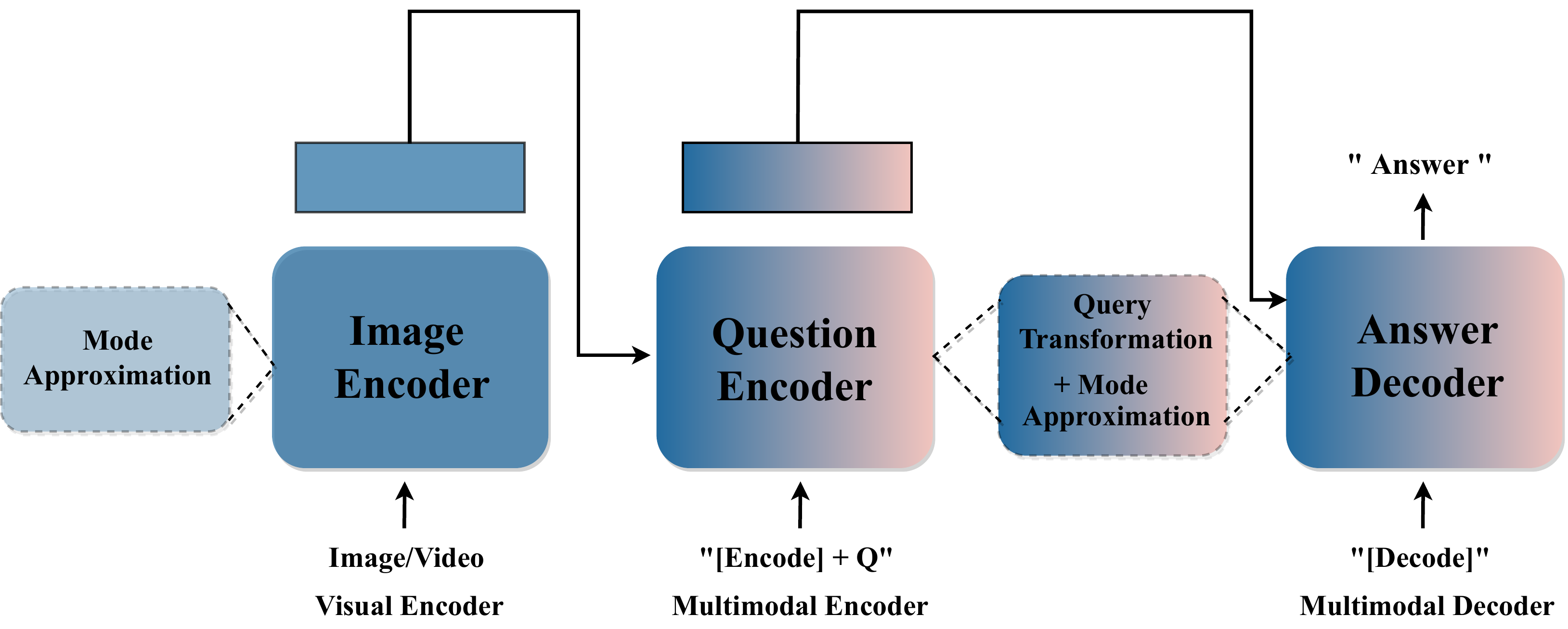}
     \caption{ Model architecture for the Visual Question Answering tasks on both images and videos.}
    \label{fig:vqa} 
\end{figure*}

Figure \ref{fig:vqa} shows the architecture of Aurora for Visual Question Answering tasks. Compared to Retrieval tasks, the VQA architecture has an additional answer decoder. During fine-tuning, the images/videos are first encoded by a single-modal visual encoder, and then the image/video-text pairs are fused using a multimodal encoder and given to the decoder for prediction. Answers are used as ground truth and Language-Model Loss is utilized for parameter updating throughout the entire training process. As the ITM Loss is no longer needed, we remove the Informative Context Enhancement module from the VQA architecture. Meanwhile, we retain the Gated Query Transformation module to preserve the complete semantic information of the question representations as much as possible. Finally, in order to further reduce the number of parameters, we share the learnable parameters of the multimodal encoder and the multimodal decoder.

\subsection{Implementation Details}

In this section, we give more training details about our Aurora. 
\begin{itemize}[leftmargin=*]
    \item For image-text downstream tasks, we set the image size into 384$\times$384. We use cosine decay to update the learning rate during training. We set the batch size to 16 for each GPU and train a total of 6 epochs. 
    
    \item For video-text retrieval tasks, we randomly sample T = 8(16) frames for each video during training(testing) while setting the frame size to be 224$\times$224. We use cosine decay to update the learning rate, we set the batch size to 8 per GPU during training and train for a total of 5 epochs.
    
    \item For VideoQA task, we also sample 8 frames per video for training while the frame size is changed to 384$\times$384. While training, we set the batch size to 4 per GPU and train 10 epochs in total. During the evaluation, we randomly sample 16 frames for each video, and we use greedy search to generate the next token when producing answer for its corresponding question. 

    \item For VQA task, we set the image size to 480$\times$480 for training/inference and adopt a batch size of 16 for each GPU. We also adopt cosine decay to change the value of learning rate at different epochs and we train 10 epochs.
\end{itemize}

 We also perform a simple cleaning of the text data and truncate all words beyond the maximum length of the sentence. All data processing and partitioning are consistent with UniAdapter and LoRA to ensure fair comparison. When implementing CP decomposition, textual encoder, visual encoder, and multimodal encoder share the same global mode factor $U$ and factor $V$ to do parameter sharing and we initialize the weights of factor $V$to be zero.

\section{Cost Analysis}
\label{sec:cost}

\begin{table*}[]
\centering
\scriptsize
\setlength{\tabcolsep}{1mm}{
\caption{Training time and GPU memory comparison.}
\label{tab: cost}
\begin{tabular}{lc|cccccccccccc}
\toprule
Method & \#Tunable & \multicolumn{2}{c}{MSCOCO} & \multicolumn{2}{c}{FLICKR30K} & \multicolumn{2}{c}{MSRVTT-QA} & \multicolumn{2}{c}{VQAv2} & \multicolumn{2}{c}{DiDemo} & \multicolumn{2}{c}{MSRVTT} \\
 & \multicolumn{1}{l|}{} & Time & Memory & Time & Memory & Time & Memory & Time & Memory & Time & Memory & Time & Memory \\ \midrule
UniAdapter (r=512) & 18.8M & 1.00 & 1.00 & 1.00 & 1.00 & 1.00 & 1.00 & 1.00 & 1.00 & 1.00 & 1.00 & 1.00 & 1.00 \\
UniAdapter (r=128) & 4.6M & 0.86 & 0.95 & 0.93 & 0.95 & 0.89 & 0.91 & 0.79 & 0.92 & \textbf{0.88} & 0.94 & \textbf{0.94} & 0.94 \\ \midrule
Aurora (r=128) & 0.2M & 0.90 & \textbf{0.93} & 0.93 & \textbf{0.94} & 0.88 & \textbf{0.89} & 0.79 & 0.91 & 0.90 & 0.92 & 0.95 & 0.93 \\
Aurora (r=64) & 0.1M & \textbf{0.84} & \textbf{0.93} & \textbf{0.92} & \textbf{0.94} & \textbf{0.86} & \textbf{0.89} & \textbf{0.76} & \textbf{0.89} & \textbf{0.88} & \textbf{0.88} & \textbf{0.94} & \textbf{0.92} \\ \bottomrule
\end{tabular}}

\end{table*}

In Table \ref{tab: cost}, following \cite{lu2023uniadapter}, we report the training time and GPU memory cost for both retrieval and VQA tasks. We regard the training time and memory cost of UniAdapter(r=512) as one unit. Since we adopt the same backbone models, the forward propagation process of the two methods, \method{} and UniAdapter, is almost consistent, and the time cost is similar. However, our \method{} has fewer trainable parameters, resulting in a slightly smaller GPU memory footprint.  

Then, We will give a theoretical analysis of the parametric complexity of the three PETL methods. Assume that the frozen backbone's visual, textual, and multimodal encoder all have L transformer layers while multimodal encoders contain cross-attention modules and visual, textual encoders contain self-attention modules. We only approximate the Query/Key/Value weight matrix in these attention-based modules. Let d donates the dimension of the hidden feature and r for rank, so the parametric complexity of the LoRA is L $\times$ (3 + 6) $\times$ 2dr $\sim$  $\mathcal{O}$(Ldr), the complexity of the UniAdapter is L $\times$ 4dr $\sim$ $\mathcal{O}$(Ldr), and the complexity of our \method{} is L $\times$ (3 + 6) $\times$ 2r + 2dr + 2Ld $\sim$ $\mathcal{O}$((L+d)r), normally r and L are much smaller than d, so from the above analysis we can draw the conclusion that when r increases, our \method{} can achieve the lowest parameter cost.

\section{Theoretical Analysis}
\label{sec:analysis}
Write $\Delta \calW = \Sum{r=1}{R}\lam_r(\boldsymbol{u_r} \circ \boldsymbol{v_r} \circ \boldsymbol{p_r})$, and the $(i,j,k)$-element of $\Delta \calW$ is:
\begin{equation}
    \Delta \calW_{ijk} = \Sum{r=1}{R}\lam_r u_{ri}v_{rj}p_{rk}. 
\end{equation}

Recall that the \textit{Frobenius norm} of a tensor $\calX \in \R^{d \times d \times N}$ is given by:
\begin{equation}
    \norm{\calX}_{F} = \br{\Sum{i_1=1}{d}\Sum{i_2=1}{d}\Sum{i_3=1}{N} \abs{x_{i_1, i_2, i_3}}^2}^{1/2}, 
\end{equation}
Hence, it suffices to analyze the convergence rate of our parameter tensors in the Euclidean space $\R^{ddN}$, where $ddN$ denotes the product of $d \times d \times N$ in order to distinguishing from the space of multi-dimensional arrays of size $d \times d \times N$. 
\begin{assumption}
    We identify $\R^{d \times d \times N}$ with $\R^{ddN}$, and let the loss function $\calL$ be defined on $\R^{ddN}$, while we still write $\calL(\calW)$ where $\calW \in \R^{d \times d \times N}$ is a tensor. 
    %This expression can be interpreted as 
    %$$\calL(\calW) = \calL(w_{111}, \cdots, w_{ddN}). $$ 
    We will also use Frobenius norm and $\ell^2$ norm interchangeably, which means that:
    \begin{equation}
        \|\calX\|_F = \|\calX\|_2. 
    \end{equation}
\end{assumption}
\begin{assumption}
    We assume that the loss function $\calL: \R^{ddN} \to \R$ 
    has the following property:
    \begin{enumerate}
        \item $\calL$ is injective. 
        \item $\calL$ is strongly convex: there exist $m$ and $M$ such that:
        \begin{equation}
            mI \preceq \nabla^2 \calL(\calX) \preceq MI. 
        \end{equation}
    That is, $\nabla^2 \calL(\calX) - mI$ is positive semidefinite and $\nabla^2 \calL(\calX) - MI$ is negative semidefinite. 
    \end{enumerate}
    
    %\item $\calL\br{\calW_0 + \Delta \calW^{(n+1)}} \leq \calL(\calW_0 + \Delta \calW^{(n)}) + \eta$ during the training process. 
    %\item $\nabla \calL$ is $M$-Lipschitz continuous, that is, for any parameter tensors $\Delta \calW^{(i)}$ and $\Delta \calW^{(j)}$, we have 
    %$$\norm{\nabla \calL\left(\calW_0 + \Delta \calW^{(i)}\right) - \nabla \calL\left(\calW_0 + \Delta \calW^{(j)}\right)}_2 \leq 
    %M\norm{\Delta \calW^{(i)} - \Delta \calW^{(j)}}_2. $$
\end{assumption}

Let $\{\lam_r^{(0)}, \boldsymbol{u}_r^{(0)}, \boldsymbol{v}_r^{(0)}, \boldsymbol{p}_r^{(0)}:r=1,\cdots,R \}$ be the randomly initialized vectors used for tensor decomposition, where $\lam_r^{(0)} \in \R, \boldsymbol{u}_r^{(0)} \in \R^d, \boldsymbol{v}_r^{(0)} \in \R^d, \boldsymbol{p}_r^{(0)} \in \R^N$. Denote 
\begin{equation}
    \Delta \calW^{(0)} = \calW_0 = \Sum{r=1}{R} \lam_r^{(0)} \boldsymbol{u}_r^{(0)} \circ \boldsymbol{v}_r^{(0)} \circ \boldsymbol{p}_r^{(0)}. 
\end{equation}
Let $\Delta \calW^{(n)}$ be the parameter tensor returned by the $n$th training epoch, that is, 
\begin{equation}
    \Delta \calW^{(n)} = \Sum{r=1}{R} \lam_r^{(n)} \boldsymbol{u}_r^{(n)} \circ \boldsymbol{v}_r^{(n)} \circ \boldsymbol{p}_r^{(n)}.
\end{equation}

\begin{theorem}
    Under the above assumptions, and suppose that we train for $n$ epochs with $\eta \leq 1/M$ using gradient descent. Let $\calW^*$ be the optimal parameter tensor, then, 
    \begin{equation}
        \calL\br{\calW_0 + \Delta \calW^{(n)}} \to \calL\br{\calW^*} \quad (n \to \infty). 
    \end{equation}
    %$$ \calL\br{\calW_0 + \Delta \calW^{(n)} - \calW_{*}} \leq \frac{\norm{\calW_0 + \Delta \calW^{(n)} - \calW_{*}}_F}{2\eta n}. $$
    Moreover, $\calW^*$ is unique. 
\end{theorem}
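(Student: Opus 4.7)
The plan is to view this as a textbook convergence result for gradient descent on a smooth, strongly convex objective, prefaced by the verification that the minimizer $\calW^*$ exists and is unique. Under the assumption $mI \preceq \nabla^2 \calL \preceq MI$ with $m > 0$, a second-order Taylor expansion around any fixed point yields the quadratic lower bound
\begin{equation}
    \calL(\calX) \geq \calL(\calY) + \langle \nabla \calL(\calY), \calX - \calY \rangle + \frac{m}{2}\|\calX - \calY\|_2^2
\end{equation}
for every $\calX, \calY \in \R^{ddN}$. This immediately implies coercivity (so a minimizer exists by Weierstrass) and strict convexity (so the minimizer is unique), which disposes of the uniqueness claim; the separately-stated injectivity hypothesis is in fact redundant for this purpose.

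Next I would interpret each epoch as a gradient step on $\Delta \calW$ in the ambient tensor space $\R^{ddN}$: since $\calW_0$ is fixed, minimizing $\calL(\calW_0 + \Delta \calW)$ over $\Delta \calW$ is the same convex problem as minimizing $\calL(\calW)$ over $\calW$, up to a translation. Invoking $M$-smoothness produces the standard descent lemma
\begin{equation}
    \calL\bigl(\calX - \eta \nabla \calL(\calX)\bigr) \leq \calL(\calX) - \eta\br{1 - \frac{\eta M}{2}}\|\nabla \calL(\calX)\|_2^2,
\end{equation}
which for $\eta \leq 1/M$ guarantees a per-step decrease of at least $\frac{\eta}{2}\|\nabla \calL(\calX)\|_2^2$.

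Minimizing the strong-convexity lower bound over $\calX$ yields the Polyak--{\L}ojasiewicz inequality
\begin{equation}
    \|\nabla \calL(\calX)\|_2^2 \geq 2m\bigl(\calL(\calX) - \calL(\calW^*)\bigr),
\end{equation}
and combining this with the descent lemma gives the geometric contraction
\begin{equation}
    \calL\bigl(\calW_0 + \Delta \calW^{(n+1)}\bigr) - \calL(\calW^*) \leq (1 - \eta m)\bigl(\calL(\calW_0 + \Delta \calW^{(n)}) - \calL(\calW^*)\bigr).
\end{equation}
Iterating $n$ times and letting $n \to \infty$ drives the right-hand side to zero, which establishes the claimed convergence.

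The main obstacle is the implicit gap between gradient descent on the CP factors $\{\lambda_r, \boldsymbol{u}_r, \boldsymbol{v}_r, \boldsymbol{p}_r\}$, which is what the algorithm actually updates, and gradient descent on $\Delta \calW$ in the ambient space, which is what the argument above treats. The factor-level dynamics are non-convex and impose a non-Euclidean, rank-dependent preconditioning on the induced tensor update, so the clean descent lemma does not transfer verbatim. I would resolve this by reading ``gradient descent'' in the ambient-space sense, justified by the fact that CP with sufficient rank surjects onto $\R^{d \times d \times N}$ and so merely constitutes a redundant overparameterization of the same convex problem. A fully rigorous treatment of factor-level gradient descent would require either non-degeneracy assumptions on the Jacobian of the CP map along the trajectory, or tools from the non-convex tensor-factorization literature, both of which fall outside the strongly convex framework asserted here.
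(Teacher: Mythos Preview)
Your argument is essentially the paper's: derive the PL-type inequality from the strong-convexity lower bound, combine it with the $M$-smoothness descent lemma to obtain a geometric contraction of $\calL(\calX^{(n)}) - \calL(\calW^*)$, and iterate. The only differences are cosmetic---the paper specializes to $\eta = 1/M$ (giving contraction factor $1 - m/M$) and invokes the injectivity hypothesis rather than strict convexity for uniqueness; your observation that injectivity is redundant and your caveat about factor-level versus ambient-space gradient descent are both well taken and go beyond what the paper addresses.
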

% mode approximation (mode = matrix)
% CP decomp unique
% exists unique (u,v,p) \to W_{optimize}:=W^*
% prove mode approximation admits W^*
% Loss(W-W^*) < \eps 
% assume Loss is convex, Lipschitz, ... 
% (u_0, v_0, p_0)
% 
% -----------------------------------Proof--------------------------------------

% 结合CP分解的唯一性和我们的收敛性, 证明在xxx条件下，存在唯一的一组mode matrix（U,V,P）,使得L(\Delta W + W_0)能够收敛
\begin{proof}
    The proof follows from \cite{boyd2004convex}. 
    For notational convenience let $\calX^{(n)} = \calW_0 + \Delta \calW^{(n)}$, and denote the optimal value $\calL(\calW^*)$ by $\lam^*$. We will begin by analyzing the convergence using arbitrary $\calX, \calY \in \R^{ddN}$, and then plug in our parameter tensors. 
    By Taylor's theorem we can write:
    \begin{equation}
        \calL(\calY) = \calL(\calX) + \nabla\calL(\calX)^T(\calY - \calX) + \frac{1}{2}(\calY - \calX)^T \nabla^2 \calL(\calZ)(\calY - \calX), 
    \end{equation}
    where $\calZ$ lies in the line segment joining $\calX$ and $\calY$. By the strong convexity assumption, we have,
    \begin{equation}
        \frac{1}{2}(\calY-\calX)^T \nabla^2 \calL(\calZ)(\Y-\X) \geq \frac{1}{2}(\Y-\X)^T m(\Y-\X) = \frac{m}{2}\norm{\Y-\X}_2^2.
    \end{equation}
    Hence 
    \begin{equation}\label{1.1}
    \calL(\Y) \geq \calL(\X) + \nabla\calL(\X)^T(\Y-\X) + \frac{m}{2}\norm{\Y-\X}_2^2.
    \end{equation}
    Now we use $\norm{\nabla\calL(\X)}_2$ to bound $\calL(\X) - \lam^*$. The right-hand side of (\ref{1.1}) is a convex quadratic function of $\Y$, hence $\Y^* = \X - 1/m\nabla\calL(\X)$ is the minimizer, thus,
    
        \begin{align}
            \calL(\Y) &\geq \calL(\X) + \nabla\calL(\X)^T(\Y^*-\X) + \frac{m}{2}\norm{Y^* - \X}_2^2 \\
            &= \calL(\X) + \nabla\calL(\X)^T\br{-\frac{1}{m}\nabla\calL(\X)} + \frac{m}{2}\norm{\frac{1}{m}\nabla\calL(X)}_2^2 \\
            &= \calL(\X) - \frac{1}{2m}\norm{\nabla\calL(\X)}_2^2.
        \end{align}
    
    Since $\Y$ is arbitrary, plugging $\X = \X^{(n)}$, we have 
    \begin{equation}\label{1.9}
        \lam^* \geq \calL(\X^{(n)}) - \frac{1}{2m}\norm{\nabla\calL(\X^{(n)})}_2^2 
    \end{equation}

    By the assumption $\nabla^2 \calL(\X) \preceq MI$ we have 
    \begin{equation}\label{1.2}
    \calL(\Y) \leq \calL(\X) + \nabla\calL(\X)^T(\Y - \X) + \frac{M}{2}\norm{\Y-\X}_2^2.
    \end{equation}
    Plugging in $\Y = \X^{(n)} - \eta\nabla\calL(\X^{(n)})$ yields
    \begin{equation}\label{1.3}
    \widetilde{\calL}(t) \leq \calL(\X^{(n)}) - \eta\norm{\nabla \calL(\X)}_2^2 + \frac{M\eta^2}{2}\norm{\nabla\calL(\X)}_2^2.
    \end{equation}
    Now we minimize over $\eta$ on both sides of (\ref{1.3}), and denote the optimal value by $\widetilde\calL(\eta^*)$. The right-hand side of (\ref{1.3}) is simple quadratic, hence it is minimized by $\eta = 1/M$, and
    \begin{equation}
        \min (\mathrm{RHS}) = \calL(\X) - \frac{1}{2M}\norm{\nabla\calL(\X)}_2^2. 
    \end{equation}
    Now,
    \begin{equation}
        \calL(\X^{(n)} + \eta\Delta\X^{(n)})
    = \widetilde{\calL}(t^*) \leq \calL(\X^{(n)}) - \frac{1}{2M}\norm{\nabla \calL(\X)}_2^2.
    \end{equation}
    Subtracting $\lam^*$ on both sides, we have,
    \begin{equation}
        \calL\br{\X^{(n)} + \eta\Delta\X^{(n)}} - \lam^* \leq \calL(\X^{(n)}) - \lam^* - \frac{1}{2M}\norm{\nabla \calL(\X)}_2^2, 
    \end{equation}
    by (\ref{1.9}) we have,
    \begin{equation}
        \calL\br{\X^{(n+1)}} = \calL\br{\X^{(n)} + \eta\Delta\X^{(n)}} - \lam^* \leq \br{1-\frac{m}{M}}\br{\calL(\X^{(n)}) - \lam^*}.
    \end{equation}
    By mathematical induction, we obtain,
    \begin{equation}
        \calL(\X^{(n)}) - \lam^* \leq \br{1-\frac{m}{M}}^n \br{\calL(\X^{(0)} - \lam^*} \to 0 \quad (n \to \infty), 
    \end{equation}
    therefore $\calL(\X^{(n)}) \to \lam^*$ as $n \to \infty$. 

    Suppose $\calV^*$ is another optimal parameter tensor, then by the same argument we have 
    $\calL(\calX^{(n)}) \to \calL\br{\calV^*}$ as $n \to \infty$. Since $\R^{ddN}$ is a Hausdorff space, $\calL\br{\calV^*} = \calL\br{\calW^*}$. By our assumption that $\calL$ is injective, $\calW^* = \calV^*$. 
\end{proof}
\begin{figure*}[!t]
    \centering
    \includegraphics[width=\textwidth]{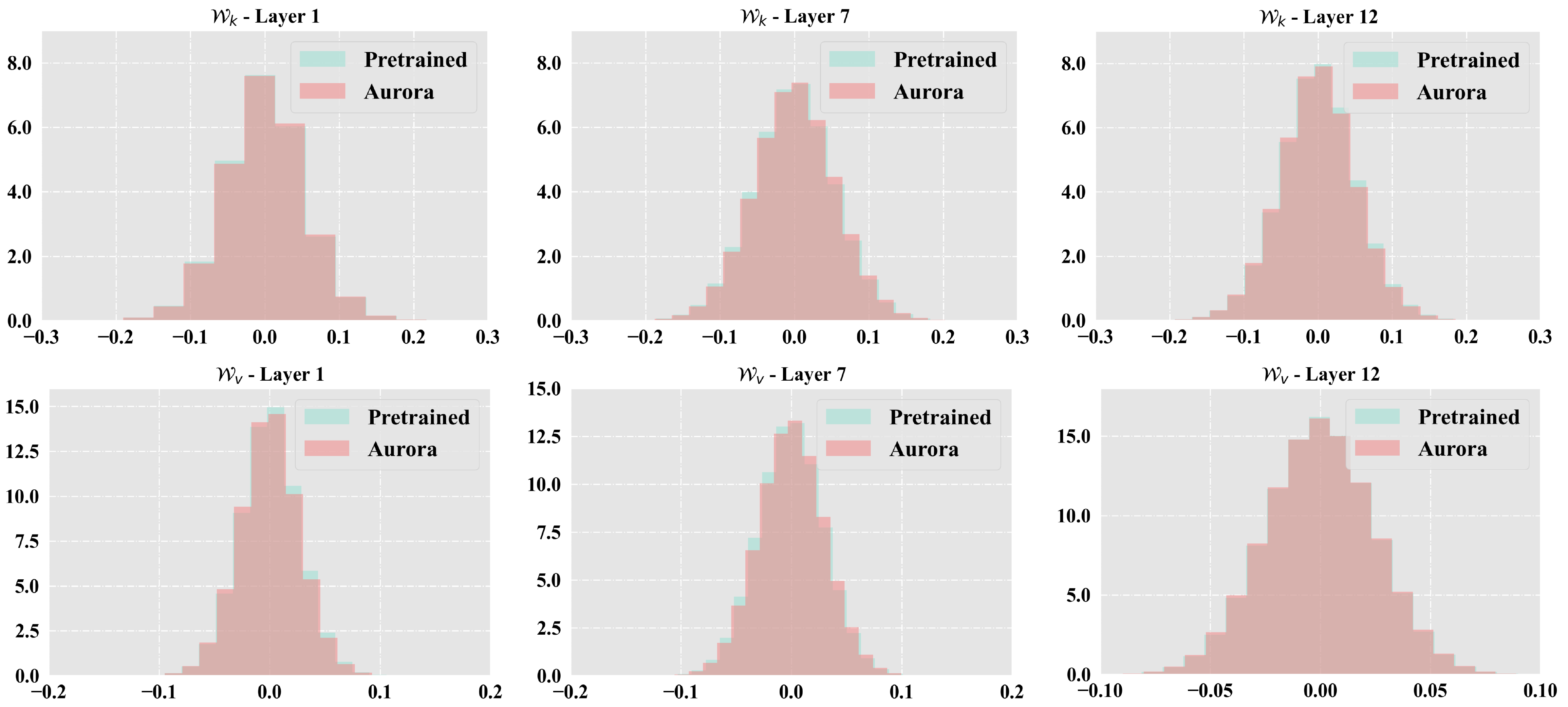}
     \caption{We represent the parameter distribution on different layers of the pre-trained large-scale multimodal foundation model (BLIP) \textit{vs.} our Aurora, which is tuned on MSCOCO for image-text retrieval. Notably, $\boldsymbol{\mathcal{W}}_{k}$ and $\boldsymbol{\mathcal{W}}_{v}$ are the stack of the key and value projection matrices in different modality branches. }
    \label{fig:pd_pretrain} 
\end{figure*}
\begin{figure*}[!t]
    \centering
    \includegraphics[width=\textwidth]{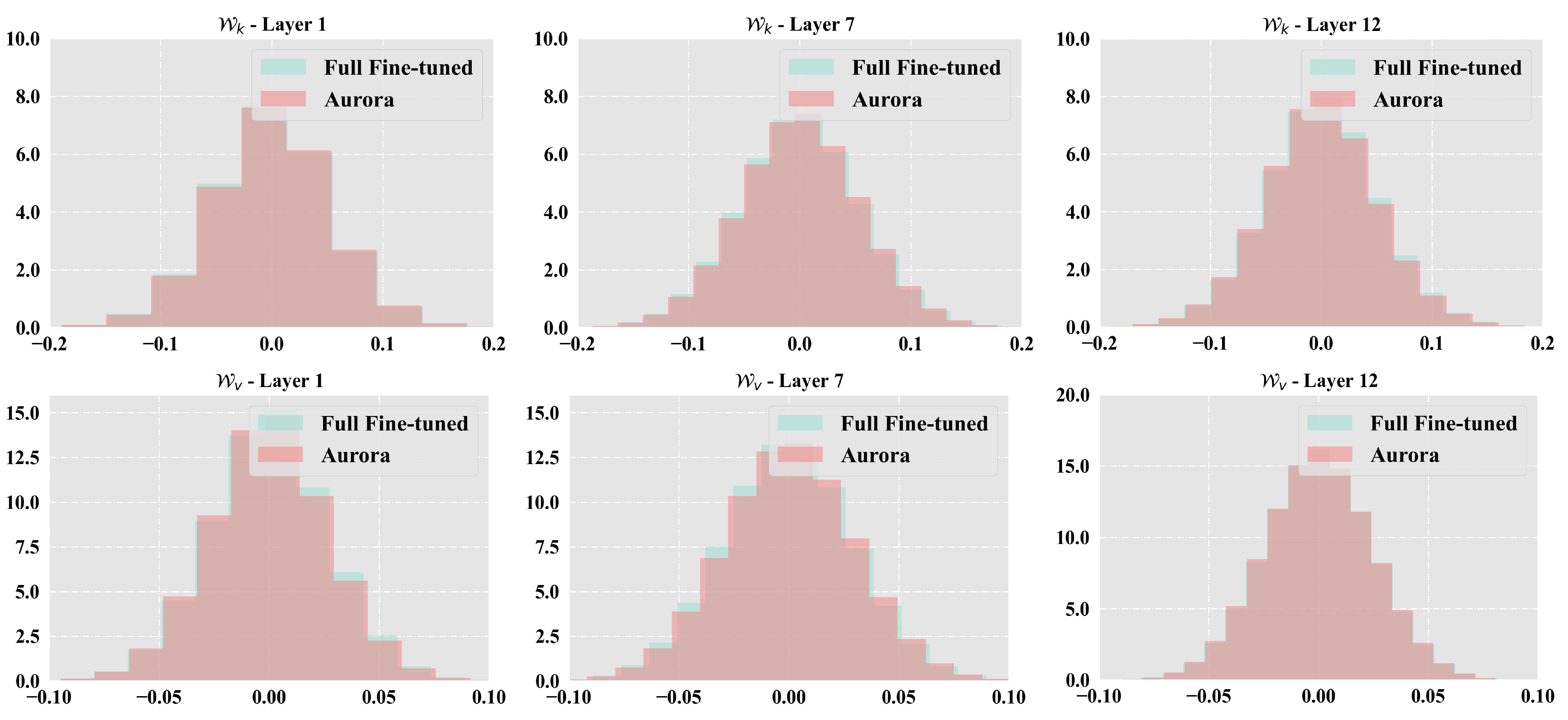}
     \caption{We represent the parameter distribution on different layers of the full fine-tuned model \textit{vs.} our Aurora, which is tuned on MSCOCO for image-text retrieval. Notably, $\boldsymbol{\mathcal{W}}_{k}$ and $\boldsymbol{\mathcal{W}}_{v}$ are the stack of the key and value projection matrices in different modality branches. }
    \label{fig:pd_ft} 
\end{figure*}

\section{Visualization Analysis}
\label{sec:vis}
\subsection{Parameter Distribution}

Figure \ref{fig:pd_pretrain} and Figure \ref{fig:pd_ft} show more details on the parameter distribution comparisons between our \method{} and the pre-trained model and the full fine-tuned model, in which similar results can be observed on  $\boldsymbol{\mathcal{W}}_{k}$ and $\boldsymbol{\mathcal{W}}_{v}$.
% Figure \ref{fig:pd_appendix} shows additional parameter distribution visualization on $\boldsymbol{\mathcal{W}}_{k}$ and $\boldsymbol{\mathcal{W}}_{v}$, and the similar results can be observed.
We can see that the mode approximation parameters adjust the original weights, and change the distribution of weights and biases to fit the downstream task. It can be concluded that \method{} has several advantages over traditional fine-tuning approaches. First, it avoids over-fitting to specific downstream tasks by only adjusting the pre-trained model parameters in a small local range. Second, it reduces the amount of training required on new data, making it more efficient and cost-effective. Last but not least, \method{} can further improve the model's performance on downstream tasks.

\subsection{Case Study}

\begin{figure*}[!t]
    \centering
    \includegraphics[width=\textwidth]{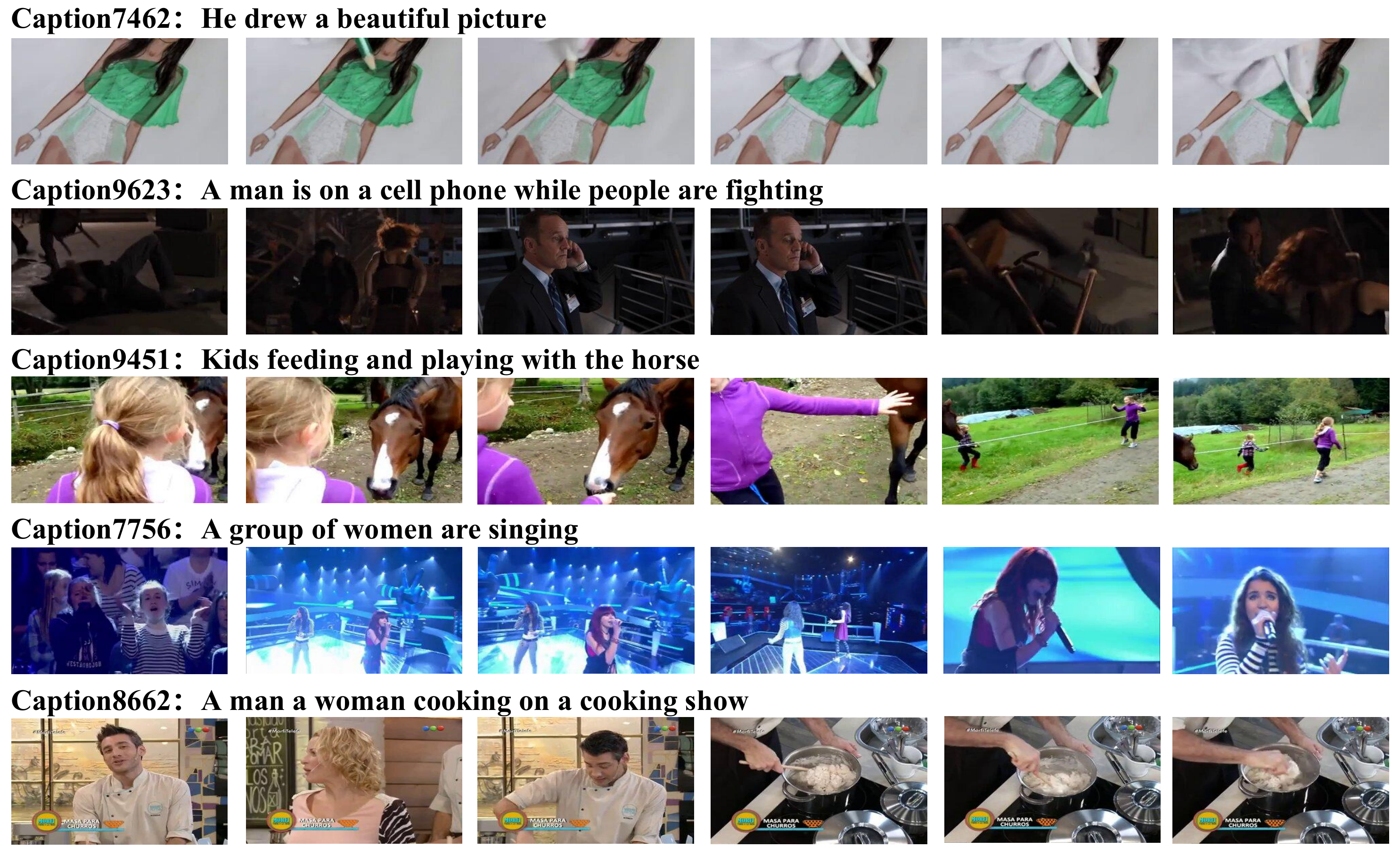}
     \caption{Video-Text retrieval cases on MSRVTT test set.}
    \label{fig:case1} 
\end{figure*}

\begin{figure*}[!t]
    \centering
    \includegraphics[width=\textwidth]{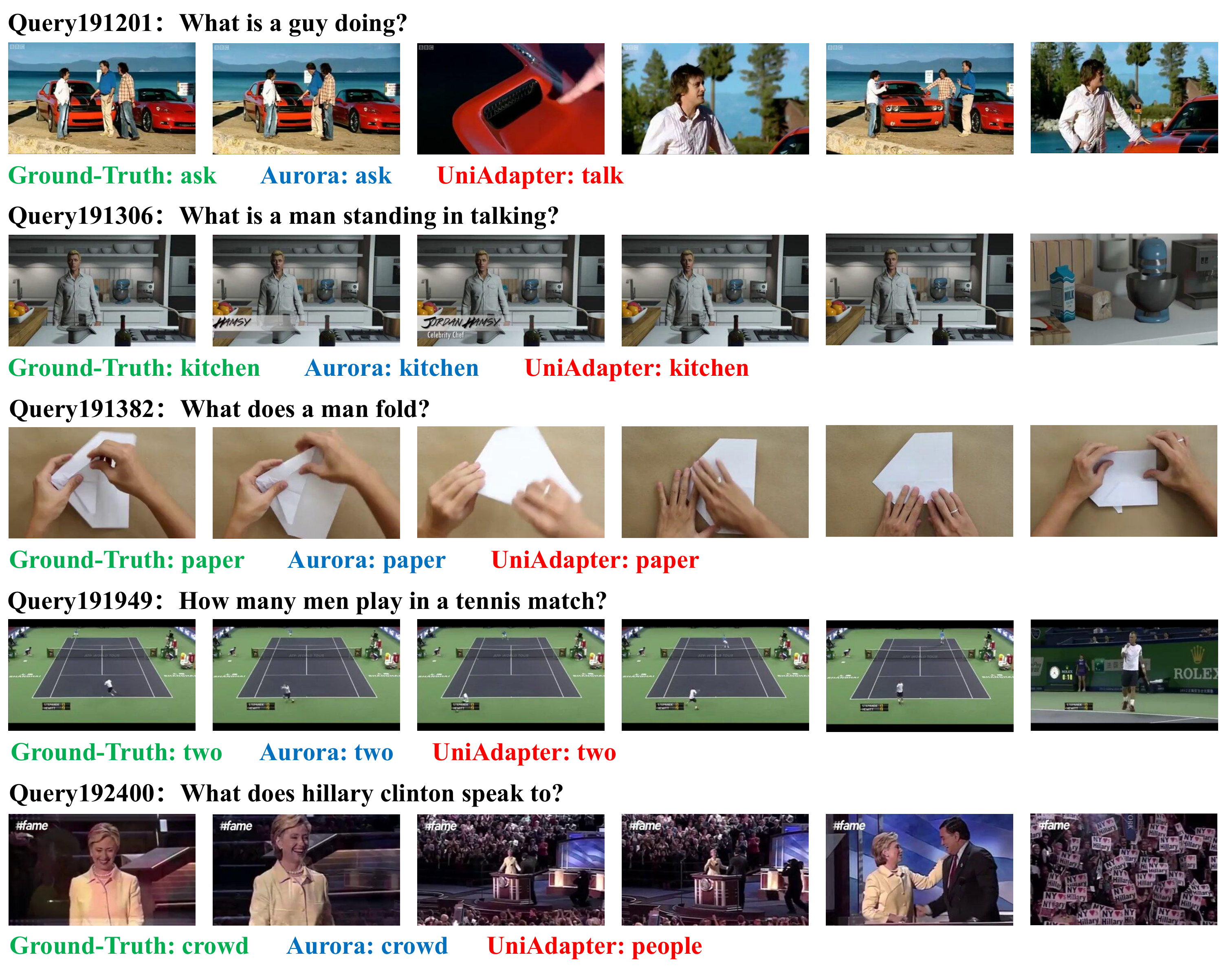}
     \caption{Video Question Answering cases on MSRVTT-QA test set.}
    \label{fig:case2} 
\end{figure*}

\noindent\textbf{Visual-Text Retrieval.} Figure \ref{fig:case1} demonstrates some actual examples of \method{} performing text-to-video task on MSRVTT test set. In conclusion, the results presented highlight the exceptional performance of \method{} in searching relevant videos from textual descriptions. The accuracy and realism of the returned videos demonstrate the effectiveness of our proposed method in understanding the relationship between text and visual content.

\noindent\textbf{Visual Question Answering.} Figure \ref{fig:case2} gives some question-answering examples of \method{} and UniAdapter on the MSRVTT-QA dataset. Specifically, our method is able to reason about the meaning of the text and video information to answer the questions more accurately than UniAdapter. This is an important result because the ability to reason about the meaning of both text and visual information is essential for understanding multimodal data.

Overall, the qualitative results shown in Figure \ref{fig:case1} and Figure \ref{fig:case2} demonstrate the effectiveness of our proposed method in both multimodal retrieval and question-answering tasks. We believe that our approach has the potential to be used in many multimodal applications, where understanding and analyzing multimedia data is essential.

\bibliography{rec.bib}
\bibliographystyle{plain}

\end{document}